\def\ps@pprintTitle{%
 \let\@oddhead\@empty
 \let\@evenhead\@empty
\def\@oddfoot{\footnotesize
      \itshape\hfill\today}%
     \let\@evenfoot\@oddfoot}
\newcommand{\vect}[1]{\mathbf{#1}} 
\newcommand{\norm}[2][2]{\left\lVert #2 \right\rVert_{#1}} 
\DeclareMathSymbol{\R}{\mathalpha}{AMSb}{"52} 
\providecommand{\algorithmname}{Algorithm}
\definecolor{LightGray}{gray}{0.85}
\newtheorem{theorem}{Theorem}
\newtheorem{proposition}[theorem]{Proposition}
\newcommand*\widefbox[1]{\fbox{\hspace{0.5em}#1\hspace{0.5em}}}
\journal{Neural Networks}
\begin{document}

\begin{frontmatter}

\title{A Framework for Parallel and Distributed Training of Neural Networks}

\author[sapienza]{Simone Scardapane\corref{cor1}}
\ead{simone.scardapane@uniroma1.it}
\cortext[cor1]{Corresponding author. Phone: +39 06 44585495, Fax: +39 06 4873300.}

\author[perugia]{Paolo Di Lorenzo\corref{cor2}}
\ead{paolo.dilorenzo@unipg.it}
\cortext[cor2]{The work of Paolo Di Lorenzo was founded by the ``Fondazione Cassa di Risparmio di Perugia''}

\address[sapienza]{Department of Information Engineering, Electronics and Telecommunications, ``Sapienza'' University of Rome, Via Eudossiana 18, 00184 Rome, Italy}

\address[perugia]{Department of Engineering, University of Perugia, Via G. Duranti 93, 06125, Perugia, Italy}

\begin{abstract}
The aim of this paper is to develop a general framework for training neural networks (NNs) in a distributed environment, where
training data is partitioned over a set of agents that communicate with each other through a sparse, possibly time-varying, connectivity pattern. In such distributed scenario, the training problem can be formulated as the (regularized) optimization of a non-convex social cost function, given by the sum of local (non-convex) costs, where each agent contributes with a single error term defined with respect to its local dataset. To devise a flexible and efficient solution, we customize a recently proposed framework for non-convex optimization over networks, which hinges on a (primal) convexification-decomposition technique to handle non-convexity, and a dynamic consensus procedure to diffuse information among the agents. Several typical choices for the training criterion (e.g., squared loss, cross entropy, etc.) and regularization (e.g., $\ell_2$ norm, sparsity inducing penalties, etc.) are included in the framework and explored along the paper. Convergence to a stationary solution of the social non-convex problem is guaranteed under mild assumptions. Additionally, we show a principled way allowing each agent to exploit a possible multi-core architecture (e.g., a local cloud) in order to parallelize its local optimization step, resulting in strategies that are both distributed (across the agents) and parallel (inside each agent) in nature. A comprehensive set of experimental results validate the proposed approach.
\end{abstract}

\begin{keyword}
Neural network, distributed learning, parallel computing, networks.
\end{keyword}

{\footnotesize
       Published on \textbf{Neural Networks}, doi 10.1016/j.neunet.2017.04.004.\hfill\today}

\end{frontmatter}

\section{Introduction}
\label{sec:introduction}

We consider the problem of training a Neural Network (NN) model, when training data is distributed over different agents that are connected by a sparse, possibly time-varying, communication network. To grasp the main motivation, let us consider a `smart' environment, wherein thousands of low-power sensors (e.g., cameras, wearables, etc.) are embedded to provide context-aware assistance, security provisioning, and so forth \citep{pottie2000wireless,boric2002wireless}. If the amount of produced data is small and we can count on a very reliable communication network, we may think of a centralized approach where all the data are transmitted to one (or more) fusion center that performs the learning task. However, in big data applications, sharing local information with a central processor might be either unfeasible or not economical/efficient, owing to the large size of the network and volume of data, time-varying network topology, energy constraints, robustness and/or privacy concerns. Performing the computation in a centralized fashion may raise robustness concerns as well, since the central processor represents a bottleneck and an isolated point of failure.
For these reasons, effective learning methods must necessarily exploit distributed computation/learning architectures (with possibly parallelized multi-core processors), while keeping into account the distributed large-scale storage of data over the network and communication constraints. Very often, the implementation of such learning schemes requires the training of a \textit{shared} predictive function, i.e., a common model accessible independently by each of them. Considering the previous example, suppose that a set of embedded cameras is taking multiple high-resolution photos of a possible security threat. In this case, if the threat needs to be recognized quickly in the near future, the sensors have to train a shared classifier that must leverage on all the currently acquired photos, in order to obtain a sufficiently high accuracy. These problems are ubiquitous in the real world, and appear in many practical systems such as, e.g., wireless sensor networks \citep{predd2006distributed}, smart grids, distributed databases \citep{lazarevic2002boosting}, robotic swarms, just to name a few.

If a predictive behavior is needed, however, the designer of the distributed system has to answer a necessary question: What kind of model should be chosen as a classifier/regressor? Since deep NNs are currently obtaining state-of-the-art results in several fields \citep{schmidhuber2015deep,lecun2015deep}, employing them appears as a reasonable choice. Nevertheless, somewhat surprisingly, the literature on distributed training algorithms for NNs satisfying all the above requirements is extremely scarce. Most authors resort either to an ensemble of models trained independently by the agents \citep{lazarevic2002boosting,zhang2013privacy}, or to strategies requiring the sum of the gradients' contributions for all agents at every single iteration \citep{samet2012privacy,georgopoulos2014distributed}, exploiting the additivity of the gradients updates. Both these approaches can be easily shown to be unsatisfactory in general. In the former case, we have no guarantee that the ensemble of models will perform as good as a single model trained on the collection of all local datasets. In the latter case, instead, a global sum at every iteration might be infeasible due to an excessive amount of communication, particularly for large models comprising several hundred thousands parameters. It is also worth mentioning that a lot of research has been devoted recently to the design of parallel, asynchronous versions of stochastic gradient descent for training NNs on large clusters of commodity hardware \citep{dean2012large,sak2014squence,abadi2016tensorflow}. However, all these previous methods require the presence of at least one central server node, which coordinates the learning process; thus, they are not applicable in our context.

One of the reasons for the lack of distributed training methods for NNs is that, in principle, these methods require the solution of a distributed \textit{non-convex} optimization problem, which was tackled only in a few papers even in the optimization literature \citep{bianchi2013convergence,di2016next}. On the other side, if we turn our attention to methods for convex learning problems, the literature on their distributed training is vast, including algorithms for decentralized optimization of linear predictors \citep{xiao2007distributed,sayed2014adaptive,sayed2014adaptation}, sparse linear models \citep{mateos2010distributed,di2013sparse}, kernel ridge regression \citep{predd2006distributed,predd2009collaborative}, random-weights networks \citep{huang2015distributed,scardapane2015distributed,scardapane2016decentralized}, support vector machines \citep{navia2006distributed,lu2008distributed,forero2010consensus,scardapane2016distributed}, and kernel filtering \citep{perez2010robust,gao2015diffusion}.

\textit{Contribution:} In this paper, we propose an algorithmic framework for training general NN models in a fully distributed scenario, which encompasses several common loss functions and regularization terms.\footnote{A preliminary version of this work, focusing only on the squared loss function, was presented in \citep{di2016neuralnetworks}.} In particular, we build upon the in-network nonconvex optimization (NEXT) algorithm proposed in \citep{di2016next}, and recently extended in \citep{sun2016distributed} to handle general time-varying topologies. NEXT is one of the first methods to solve distributed non-convex optimization problems over networks of agents. The algorithm, which leverages on the so-called successive convex approximation (SCA) family of methods \citep{facchinei2015parallel}, is built upon two foundational ideas. First and foremost, at every iteration, the original non-convex problem is replaced with a strongly convex approximation, which is solved \textit{locally} at every agent. As we will illustrate along the paper, several kinds of convexification are possible, resulting in different trade-offs in terms of computational complexity and speed of convergence.
Second, the framework exploits a dynamic consensus procedure \citep{zhu2010discrete}, so that each agent can recover the information relative to all the other agents, which typically is not available at its local side. The resulting algorithms are shown to be convergent to a stationary solution of the social non-convex problem under loose requirements relative to the agents' communication topology, the choice of the algorithm's parameters, and the structure of the optimization problem. A further interesting aspect of the framework presented here is that the local optimization problems can be easily parallelized in a principled way (up to one NN parameter per available processor), without loosing the convergence properties of the framework. Consider, for example, the case of multiple medical institutions requiring the training of a common NN (e.g., for diagnosis purposes) leveraging on all historical clinical information \citep{vieira2006secure}. In this case, a decentralized algorithm is required due to strong privacy concerns on the release of medical, sensible information about the patients. Nonetheless, each institution may have access to an internal private cloud infrastructure. Using the framework outlined in this paper, privacy is guaranteed via the use of a distributed protocol, while each institution can parallelize its optimization steps using local cloud computing hardware. In this way, the resulting algorithms are both distributed (across the nodes) and parallel (inside each node) in nature. At the end of the (distributed) training process, each agent has access to the optimal set of NN's parameters, and it can apply the resulting model to newly arriving data (e.g., new photos taken from the camera) independently of the other agents. A comprehensive set of experimental results validate the proposed approach.

\textit{Outline of the paper:} The rest of the paper is organized as follows. In Section \ref{sec:problem_formulation}, we formalize the problem of distributed NN training. In Section \ref{sec:next} we describe the general framework for distributed NN training built upon the NEXT algorithm. Then, in Section \ref{sec:practical}, we consider the customization of the framework to different loss functions (squared loss, cross entropy, etc.) and regularization terms ($\ell_2$ norm, sparsity inducing penalties, etc.). Section \ref{sec:parallelizing_surrogate_optimization} describes a principled way to parallelize the optimization phase. In Section \ref{sec:experimental_validation}, we perform a large set of experiments aimed at assessing the performance of the proposed framework. Finally, Section \ref{sec:conclusion} draws some conclusions and future lines of research.

\textit{Notation:} We denote vectors using boldface lowercase letters, e.g., $\vect{a}$; matrices are denoted by boldface uppercase letters, e.g., $\vect{A}$. All vectors are assumed to be column vectors. The operator $\norm[p]{\cdot}$ is the standard $\ell_p$ norm on an Euclidean space. For $p=2$, it coincides with the Euclidean norm, while for $p=1$ we obtain the Manhattan (or taxicab) norm defined for a generic vector $\vect{v} \in \R^B$ as $\norm[1]{\vect{v}} = \sum_{k=1}^B |v_k|$. The notation $a[n]$ denotes the dependence of $a$ on the time-index $n$. Other notation is introduced along the paper when required.

\section{Problem Formulation}
\label{sec:problem_formulation}

Let us consider the problem of training a generic NN model $f(\vect{w}; \vect{x})$, where $\vect{x} \in \R^d$ denotes the $d$-dimensional input vector of the network, whereas $\vect{w} \in \R^Q$ is the vector collecting all the adaptable parameters that we aim to optimize. Note that we are considering the NN as a function of its parameters, as this will make the following derivation simpler. We are not concerned with the specific structure of the NN $f(\cdot)$ (i.e., number of hidden layers, choice of the activation functions, etc.), as long as the following assumptions are satisfied for any possible input vector $\vect{x} \in \R^d$.

\vspace{0.5em}
\noindent \textbf{Assumption A [On the NN model]:}
\begin{description}
\item[(A1)] $f$ is in $C^1$, i.e., it is continuously differentiable with respect to $\vect{w}$;\smallskip
\item[(A2)] $f$ has Lipschitz continuous gradient, with respect to $\vect{w}$, for some Lipschitz constant $L$, i.e.:
\begin{equation}
\left\lVert\nabla_{\vect{w}} f(\vect{w}_1; \vect{x}) - \nabla_{\vect{w}}f(\vect{w}_2; \vect{x})\right\rVert_2 \le L \bigl\lVert\vect{w}_1 - \vect{w}_2\bigr\rVert_2 \,.
\label{eq:lip_gradient}
\end{equation}
\end{description}
Assumption A is satisfied by most NN models commonly used in the literature, with the only notable exception of NN having non-differentiable activation functions such as ReLu neurons \citep{glorot2011deep}, maxout neurons \citep{goodfellow2013maxout}, and a few others. Nonetheless, convergence guarantees for these architectures are relatively uncommon even in the centralized case. In this paper, we are concerned with distributed architectures, where the data required to train the NN is not available on a centralized location, but is instead partitioned among $I$ interconnected agents. Prototypical examples of agents can be sensors in a wireless sensor networks (WSN), peers in a P2P network, power units in a smart grid, or mobile robots in a robotic swarm. At every specific time instant $n$, the communication network enabling interaction among the agents is modeled as a directed graph (digraph) $\mathcal{G}[n]=(\mathcal{V,E}[n])$, where $\mathcal{V}=\{1,\ldots,I\}$ is the vertex  set (i.e., the set of agents), and $\mathcal{E}[n]$ is the set of (possibly) time-varying directed edges. The in-neighborhood of agent $i$ at time $n$ (including node $i$) is defined as  $\mathcal{N}_i^{\rm in}[n]=\{j|(j,i)\in\mathcal{E}[n]\}\cup\{i\}$: node $i$ can receive information from node $j\neq i$ at time instant $n$ only if $j\in\mathcal{N}_i^{\rm in}[n]$. By assuming only single-hop communication, the resulting framework can be applied to the broadest possible class of problems.\footnote{More in general, $\mathcal{G}[n]$ corresponds to all feasible communication links between two agents. A multi-hop network can be described with an equivalent single-hop network by considering all possible paths as a direct link in the equivalent graph.} Due to this, each agent has a limited view and knowledge about the overall (possibly time-varying) network. Also, we assume that there is no agent (or finite number of them) that is able to collect all the data and coordinate the overall learning process. Associated with each graph $\mathcal{G}[n]$, we introduce (possibly) time-varying weights $c_{ij}[n]$ matching $\mathcal{G}[n]$:
\begin{align}\label{weights}
c_{ij}[n]=\left\{
             \begin{array}{ll}
               \theta_{ij}\in[\vartheta,1] & \hbox{if $j\in \mathcal{N}_i^{\rm in}[n]$;} \vspace{.2cm}\\
                 0 & \hbox{otherwise,}
             \end{array}
           \right.
\end{align}
for some $\vartheta\in (0,1)$, and define the matrix $\vect{C}[n]\triangleq (c_{ij}[n])_{i,j=1}^I$. These weights are used in the definition of the proposed algorithm in order to locally combine the information diffused over every neighborhood, i.e., $c_{ij}$ represents the weight given by agent $i$ to the information coming from agent $j$. The weights are given and are required to respect some properties listed later on in \eqref{double_stochastic}. Many choices are possible, and a brief overview can be found in \citet{di2016next}. Clearly, different setups for the weights may influence the convergence speed. Roughly speaking, simple choices like the one we detail in Section \ref{sec:experimental_setup} can be implemented immediately with no knowledge of the graph topology far from each neighborhood. On the contrary, more sophisticated weights can speedup convergence, while requiring global knowledge of the network and/or the solution to some optimization problem, e.g. see the strategies detailed in \citet{xiao2004fast}.

For the purpose of training the NN, we assume that the $i$th agent has access to a local training dataset of $N_i$ examples, denoted as $\mathcal{S}_i = \left\{ \vect{x}_{i,m}, d_{i,m} \right\}_{m=1}^{N_i}$, where we consider a single-output problem with $d_{i,m} \in \R$ for simplicity of overall notation. The output of the NN is an integer or a real value, depending on whether we are facing a classification task or a regression task, respectively. Given all the previous definitions, a general formulation for the distributed training of NNs can be cast as the minimization of a social cost function $G$ plus a regularization term $r(\cdot)$, which writes as:
\begin{equation}
\underset{\vect{w}}{\min} \;\; U(\vect{w}) = G(\vect{w}) + r(\vect{w}) = \sum_{i=1}^I g_i(\vect{w}) + r(\vect{w}) \,,
\label{Dist_NN_training}
\end{equation}
where $g_i(\cdot)$ is the error term relative to the $i$th local dataset:
\begin{equation}
g_i(\vect{w}) = \sum_{m\in \mathcal{S}_i} l\Bigl(d_{i,m}, f(\vect{w};\vect{x}_{i,m})\Bigr) \,,
\label{eq:local_NN_cost}
\end{equation}
\vspace{-.1cm}
\noindent with $l(\cdot, \cdot)$ denoting a generic (convex) loss function, while $r(\vect{w})$ is a regularization term. Due to the nonlinearity of the NN model $f(\vect{w};\vect{x})$, problem (\ref{Dist_NN_training}) is typically \textit{non-convex}. In this work, we consider the following assumptions on the functions involved in (\ref{Dist_NN_training})-(\ref{eq:local_NN_cost}).

\vspace{0.5em}
\noindent \textbf{Assumption B [On Problem (\ref{Dist_NN_training})]:}
\begin{description}
\item[(B1)] $l$ is convex and $C^1$, with Lipschitz continuous gradient; \smallskip
\item[(B2)] {$r$ satisfies (B1), or it is a nondifferentiable convex function with bounded subgradients;}\smallskip
\item[(B3)] {$U$ is coercive, i.e., $\displaystyle\lim_{\|\vect{w}\|\rightarrow \infty} U(\vect{w})=+\infty$.}
\end{description}

\noindent The structure of the function $l$ in (\ref{eq:local_NN_cost}) depends on the learning task (i.e., regression, classification, etc.). Typical choices are the squared loss for regression problems, and the cross-entropy for classification tasks \citep{haykin2009neural}. The regularization function $r(\vect{w})$ in (\ref{Dist_NN_training}) is commonly chosen to avoid overfitted solutions and/or impose a specific structure in the solution, e.g., sparsity or group sparsity. Typical choices are the $\ell_2$ and $\ell_1$ norms. All these functions satisfy Assumption B, and will be discussed in detail in the sequel. In view of the distributed nature of the problem, the $i$th agent knows its own cost function $g_i$ and the common regularization term $r$, but it does not have access to $g_j$ for $j \neq i$, nor can it exchange freely its own dataset $\mathcal{S}_i$ due to a variety of reasons, including privacy, data volume, and communication constraints. This aspect, combined with the non-convexity of $\eqref{Dist_NN_training}$, makes optimizing $\eqref{Dist_NN_training}$ in a distributed fashion a challenging problem, which has no ready-to-use solution available in the literature. The design of such algorithmic framework is the topic of the next three sections.

\section{NEXT: In-Network Successive Convex Approximation}
\label{sec:next}

In this section, we review the basics of the NEXT framework proposed in \citep{di2016next}, which was designed to solve general nonconvex distributed problems of the form \eqref{Dist_NN_training}. The next section will then focus on how to customize the framework to the NN distributed training problem considered in this paper. Due to lack of space, we provide only a very brief introduction to the NEXT framework, and we refer the interested readers to \citep{di2016next,sun2016distributed} for a full treatment, which also includes a proof of the convergence results. NEXT combines SCA techniques (Step 1) with dynamic consensus mechanisms (Steps 2 and 3), as described next.

\smallskip
\noindent \textbf{Step 1 (local SCA optimization):} Each agent $i$ maintains a local estimate $\vect{w}_i[n]$ of the optimization variable $\vect{w}$ that is iteratively updated. Solving directly Problem (\ref{Dist_NN_training}) may be too costly (due to the nonconvexity of $G$) and is  not even feasible in a distributed setting. One may then prefer to approximate Problem (\ref{Dist_NN_training}), in some suitable sense, in order to permit each agent to compute \emph{locally} and \emph{efficiently} the new iteration. In particular, writing  $G(\vect{w}_i)=g_i(\vect{w}_i)+\sum_{j\neq i}g_j(\vect{w}_i)$, we consider a convexification of  ${G}$ having the following form: i) at every iteration $n$, the (possibly) nonconvex  $g_i(\vect{w}_i)$ is replaced by a strongly convex surrogate, say  $\widetilde{g}_i(\cdot;\vect{w}_i[n]):\mathbb{R}^Q \rightarrow \mathbb{R}$, which may depend on the current iterate $\vect{w}_i[n]$; and  ii)  $\sum_{j\neq i}g_j(\vect{w}_i)$  is  linearized around $\vect{w}_i[n]$. More formally, the proposed  updating scheme reads: at every iteration $n$,  given the local  estimate $\vect{w}_i[n]$, each agent $i$ solves the \emph{strongly convex} optimization problem:\vspace{-0.1cm}
\begin{align}\label{best_resp_x_hat_2}
&\widetilde{\vect{w}}_i[n]\,=\,\underset{\vect{w}_i}{\arg\min} \;\;\widetilde{U}_i\left(\vect{w}_i;\vect{w}_i[n],\boldsymbol{\pi}_{i}[n]\right) \\ &=\,\underset{\vect{w}_i}{\arg\min} \;\; \widetilde{g}_i(\vect{w}_i;\vect{w}_i[n])+\boldsymbol{\pi}_{i}[n]^{T}(\vect{w}_i-\vect{w}_i[n])+r(\vect{w}_i), \nonumber
\end{align}
where
\begin{equation}\label{pi}
\boldsymbol{\pi}_i[n]\triangleq\sum_{j\neq i}\nabla_{\vect{w}}\;g_j(\vect{w}_i[n]). \vspace{-0.1cm}
\end{equation}
The evaluation of (\ref{pi}) would require the knowledge of all $\nabla g_j(\vect{w}_i[n])$, $j\neq i$ at node $i$. This information is not directly available at node $i$; we will cope with this local lack of global knowledge later on in step 3. Once the surrogate problem (\ref{best_resp_x_hat_2}) is solved, each agent computes an auxiliary variable, say $\vect{z}_i[n]$, as the convex combination:
\begin{equation}
\vect{z}_i[n] = \vect{w}_i[n] + \alpha[n]\left( \widetilde{\vect{w}}_i[n] - \vect{w}_i[n] \right) \,,
\label{eq:z_update_next}
\end{equation}
where $\alpha[n]$ is a possibly time-varying step-size sequence. This concludes the optimization phase of the algorithm. An appropriate choice of the surrogate function $\widetilde{g}_i(\cdot;\vect{w}_i[n])$ guarantees the coincidence between the fixed-points of  $\widetilde{\vect{w}}_i[n]$ and the stationary solutions of Problem (\ref{Dist_NN_training}). The main results are given in the following proposition \citep{facchinei2015parallel}:

\begin{proposition}
Given Problem (\ref{Dist_NN_training}) under A1-A2 and B1-B3, suppose that $\widetilde{g}_i$ satisfies the following conditions: \vspace{-.1cm}
\begin{description}
\item[ (F1)]
 $\widetilde{g}_{i} (\mathbf{\cdot};\vect{w})$ is uniformly strongly convex with $\tau_i>0$;\smallskip
\item[  (F2)]  $\nabla \widetilde{g}_{i} (\vect{w};\vect{w}) = \nabla g_i(\vect{w})$ for all $\vect{w}$;\smallskip
\item[  (F3)]  $\nabla \widetilde{g}_{i} (\vect{w};\mathbf{\cdot})$ is uniformly Lipschitz continuous.
\end{description}
Then, the set of fixed-point of $\widetilde{\vect{w}}_i[n]$ in (\ref{best_resp_x_hat_2}) coincides with that of the stationary solutions of  (\ref{Dist_NN_training}).
\end{proposition}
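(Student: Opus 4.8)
The plan is to characterize the fixed points of the best-response map \eqref{best_resp_x_hat_2} through the first-order optimality conditions of the strongly convex surrogate, and then to recognize those conditions as identical to the stationarity conditions of the social cost $U$ in \eqref{Dist_NN_training}. A point $\vect{w}^\star$ is a fixed point when, setting $\vect{w}_i[n]=\vect{w}^\star$ at every agent (so that the iterates are consensual), the surrogate optimization returns $\widetilde{\vect{w}}_i[n]=\vect{w}^\star$. By F1 the surrogate objective $\widetilde{U}_i(\cdot;\vect{w}^\star,\boldsymbol{\pi}_i[n])$ is strongly convex, so it admits a unique minimizer and the best-response map is well defined; moreover, for such a convex composite problem the first-order optimality condition is both necessary and sufficient, which is what makes a clean \emph{iff} characterization possible.

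First I would write the optimality condition for \eqref{best_resp_x_hat_2}. Since $r$ may be nonsmooth by B2, I would use the subdifferential inclusion $0\in\nabla_{\vect{w}_i}\widetilde{g}_i(\widetilde{\vect{w}}_i[n];\vect{w}^\star)+\boldsymbol{\pi}_i[n]+\partial r(\widetilde{\vect{w}}_i[n])$, or equivalently the variational inequality $(\vect{w}-\widetilde{\vect{w}}_i[n])^{T}\bigl(\nabla_{\vect{w}_i}\widetilde{g}_i(\widetilde{\vect{w}}_i[n];\vect{w}^\star)+\boldsymbol{\pi}_i[n]\bigr)+r(\vect{w})-r(\widetilde{\vect{w}}_i[n])\ge 0$ for all $\vect{w}$. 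Imposing the fixed-point relation $\widetilde{\vect{w}}_i[n]=\vect{w}^\star$ and invoking the gradient-consistency property F2, namely $\nabla_{\vect{w}_i}\widetilde{g}_i(\vect{w}^\star;\vect{w}^\star)=\nabla g_i(\vect{w}^\star)$, collapses the smooth part of the gradient into $\nabla g_i(\vect{w}^\star)+\boldsymbol{\pi}_i[n]$. Recalling the definition \eqref{pi} of $\boldsymbol{\pi}_i[n]$ evaluated at the consensus point, this sum telescopes to $\nabla g_i(\vect{w}^\star)+\sum_{j\neq i}\nabla g_j(\vect{w}^\star)=\sum_{j=1}^{I}\nabla g_j(\vect{w}^\star)=\nabla G(\vect{w}^\star)$.

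The resulting condition $0\in\nabla G(\vect{w}^\star)+\partial r(\vect{w}^\star)$ is precisely the definition of a stationary point of $U=G+r$; here A1--A2 together with B1 guarantee that $G$ is $C^1$, so $\nabla G$ is well defined. This shows that every fixed point is a stationary solution, and the converse follows by reading the same chain in reverse, since each step is an equivalence rather than a one-sided implication. I would close by observing that the convex-combination update \eqref{eq:z_update_next} leaves the fixed-point set unchanged: at consensus the relation $\widetilde{\vect{w}}_i[n]=\vect{w}_i[n]$ forces $\vect{z}_i[n]=\vect{w}_i[n]$ for every step-size $\alpha[n]$.

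I expect the main obstacle to be the careful handling of the nonsmooth optimality condition associated with $r$. Writing it as a monotone inclusion or a variational inequality, rather than a gradient equation, is essential so that the argument covers the sparsity-inducing penalties admitted by B2, and one must check that the subdifferential sum rule applies under the stated convexity. By contrast, F3 plays no role in this static fixed-point characterization --- it is needed only to guarantee convergence of the dynamic scheme --- so I would state explicitly that F1 (well-posedness of the surrogate) and F2 (gradient consistency) are the two properties doing the actual work here.
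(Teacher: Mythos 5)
The paper does not actually prove this proposition --- it is imported verbatim from \citet{facchinei2015parallel} with no argument given --- but your proof is correct and is essentially the standard one from that reference: characterize the unique minimizer of the strongly convex surrogate by the (necessary and sufficient) inclusion $0\in\nabla\widetilde{g}_i(\widetilde{\vect{w}}_i;\vect{w}^\star)+\boldsymbol{\pi}_i+\partial r(\widetilde{\vect{w}}_i)$, impose $\widetilde{\vect{w}}_i=\vect{w}^\star$, and use F2 together with \eqref{pi} to collapse the smooth part to $\nabla G(\vect{w}^\star)$, with the converse following because every step is an equivalence. Your observations that F3 is irrelevant to this static characterization and that the nonsmooth case must be handled via the subdifferential inclusion rather than a gradient equation are both accurate.
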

\noindent Conditions F1-F3 state that $\widetilde{g}_{i}$ should be regarded as a strongly convex approximation of $g_i$ at the point $\vect{w}$, which preserves the first order properties of $g_i$. Several feasible choices are possible for a given $g_i$; the appropriate one depends on computational and communication requirements. The goal of the next section will be to illustrate some possible choices for the local surrogate cost $\widetilde{g}_{i}$ properly customized to our distributed NN training problem.

\smallskip
\noindent \textbf{Step 2 (agreement update):} To force the asymptotic agreement among the $\vect{w}_i$'s, a consensus-based step is employed on the auxiliary variables $\vect{z}_i[n]$'s. Each agent $i$ updates its local variable $\vect{w}_i[n]$ as:
 \begin{equation}\label{consensus_update}
\vect{w}_i[n+1]= \sum_{j\in \mathcal{N}_i^{\rm in}[n]} c_{ij}[n]\, \vect{z}_i[n],\vspace{-0.1cm}
\end{equation}
where $\vect{C}[n]=(c_{ij}[n])_{ij}$ is defined in (\ref{weights}), and satisfies
\begin{align}\label{double_stochastic}
\vect{C}[n]\,\mathbf{1}=\mathbf{1} \quad \text{and}\quad \mathbf{1}^T \vect{C}[n]=\mathbf{1}^T \quad \forall n.
\end{align}
Since the weights are constrained by the network topology, \eqref{consensus_update} can be implemented via local message exchanges: agent $i$ updates its estimate $\vect{w}_i$ by averaging over the current solutions $\vect{z}_j[n]$ received from its neighbors. The double stochasticity condition in (\ref{double_stochastic}) can be achieved according to a variety of predefined strategies, including the Metropolis-Hastings criterion \citep{xiao2007distributed}, or by optimizing a cost function with respect to the spectral properties of the graph \citep{xiao2004fast}.

\smallskip
\noindent \textbf{Step 3 (diffusion of information over the network):} The computation of $\widetilde{\vect{w}}_{i}[n]$ in (\ref{best_resp_x_hat_2}) is not fully distributed yet, because the evaluation of
$\boldsymbol{\pi}_{i}[n]$ in (\ref{pi}) would require the knowledge of all $\nabla g_j(\vect{w}_i[n])$, $j\neq i$, which is a global information that is not available locally at node $i$.  To cope with this issue, as proposed in \citep{di2016next}, we replace $\boldsymbol{\pi}_i[n]$  in (\ref{best_resp_x_hat_2}) with a \emph{local} estimate, say  $\widetilde{\boldsymbol{\pi}}_{i}[n]$,  asymptotically converging to $\boldsymbol{\pi}_{i}[n]$. Thus, we can update the local estimate $\widetilde{\boldsymbol{\pi}}_{i}[n]$ in a fully distributed manner as:
\begin{equation}\label{pi3}
\widetilde{\boldsymbol{\pi}}_i[n]\triangleq I\cdot \vect{y}_i[n]-\nabla g_i(\vect{w}_i[n]),
\end{equation}
where $\vect{y}_i[n]$  is a local auxiliary variable (controlled by agent $i$) that aims to asymptotically track the average of the gradients. This can be done updating $\vect{y}_i[n]$ according to the following dynamic consensus recursion:\vspace{-0.2cm}
\begin{equation}\label{y2}
\hspace{-0.04cm}\vect{y}_i[n+1]\triangleq\sum_{j=1}^I c_{ij}[n]\vect{y}_j[n] \hspace{-0.02cm} + \hspace{-0.02cm}\left(\nabla g_i(\vect{w}_i[n+1])\hspace{-0.02cm}-\hspace{-0.02cm}\nabla g_i(\vect{w}_i[n])\right) \hspace{-0.2cm}\vspace{-0.05cm}
\end{equation}
where $\vect{y}_i[0]\triangleq\nabla_{\vect{w}_i}g_i(\vect{w}_i[0])$, and can be computed locally by every agent. Note that the update of $\vect{y}_i[n]$ and thus $\widetilde{\boldsymbol{\pi}}_i[n]$ can be now performed locally with message exchanges with the agents in the neighborhood.

\smallskip

The overall procedure is summarized in Algorithm \ref{alg:general}, where $\nabla \vect{g}_i[n]$ is used as a simplified notation for $\nabla_{\vect{w}_i} g_i(\vect{w}_i[n])$. Its convergence properties are reported in the following Proposition.

\begin{proposition}{
Let $\{\vect{w}[n]\}_n\triangleq \{(\vect{w}_i[n])_{i=1}^I\}_n$ be the sequence generated by Algorithm 1, and let $\{\overline{\vect{w}}[n]\}_n\triangleq \{(1/I)\,\sum_{i=1}^I\vect{w}_i[n]\}_n$ be its average. Suppose that i) Assumptions A and B hold; ii) the sequence of graphs describing the network is $B$-strongly connected\footnote{Formally, there exists an integer $B > 0$ such that the graph $\mathcal{G}[k]=(\mathcal{V},\mathcal{E}_B[k])$, with $\mathcal{E}_B[k]=\bigcup_{n=kB}^{(k+1)B-1}\mathcal{E}[n]$
is strongly connected, for all $k\geq0$.}; iii) condition (\ref{double_stochastic}) holds; and iv) the step-size sequence $\{\alpha[n]\}_n$ is chosen so that  $\alpha[n]\in (0,1]$ for all $n$ and $\sum_{n=0}^{\infty}\alpha[n]=\infty$.
%
Then, (a) all the limit points of the sequence $\{\overline{\vect{w}}[n]\}_n$ are stationary solutions of (\ref{Dist_NN_training}); (b)  all the sequences  $\{\vect{w}_i[n]\}_n$  asymptotically agree, i.e., $\|\vect{w}_{i}[n]-\overline{\vect{w}}[n]\|_2\underset{n\rightarrow\infty}{\longrightarrow}0
 $, for all $i$.}
 \label{convergence_th}
\end{proposition}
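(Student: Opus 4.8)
The plan is to establish the result by showing that the distributed NN training problem \eqref{Dist_NN_training}, together with the recursions \eqref{best_resp_x_hat_2}--\eqref{y2} that constitute Algorithm 1, is a particular instance of the general NEXT framework analyzed in \citet{di2016next} and, for the time-varying topologies considered here, in \citet{sun2016distributed}. Convergence to stationary points has already been proved in that general setting; the task therefore reduces to verifying that every structural hypothesis of the general theorem is implied by our Assumptions A and B and by the surrogate conditions F1--F3. Hypotheses (ii)--(iv) of the present statement ($B$-strong connectivity, double stochasticity \eqref{double_stochastic}, and the step-size rule) correspond directly to the network and step-size requirements under which the general theorem is proved, so they transfer immediately. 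What remains is to check the regularity of the cost functions.

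First I would verify that each local cost $g_i$ in \eqref{eq:local_NN_cost} is $C^1$ with Lipschitz continuous gradient. By the chain rule,
\[
\nabla_{\vect{w}} g_i(\vect{w}) = \sum_{m \in \mathcal{S}_i} \partial_2 l\bigl(d_{i,m}, f(\vect{w};\vect{x}_{i,m})\bigr)\, \nabla_{\vect{w}} f(\vect{w};\vect{x}_{i,m}),
\]
which is well defined and continuous by A1 and B1. The Lipschitz continuity of $\nabla g_i$ is the delicate point: each summand is a product of the scalar $\partial_2 l$ and the vector $\nabla_{\vect{w}} f$, and a product of Lipschitz-gradient maps is Lipschitz only on bounded sets, where the individual factors stay bounded. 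This is exactly where coercivity (B3) enters: since $U$ is coercive, a boundedness argument (standard for NEXT) confines the generated iterates to a compact sublevel set, on which A2 and B1 furnish uniform bounds on $f$, $\nabla_{\vect{w}} f$, and $\partial_2 l$, and hence a finite Lipschitz constant for $\nabla g_i$. The regularizer $r$ is handled directly by B2, which matches the admissible class (smooth with Lipschitz gradient, or convex with bounded subgradients) required by NEXT, while F1--F3 guarantee, via the first Proposition, that the fixed points of the best-response map \eqref{best_resp_x_hat_2} coincide with the stationary points of \eqref{Dist_NN_training}.

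With the hypotheses verified, I would recall the two pillars of the NEXT convergence argument. For part (b), the double-stochastic mixing \eqref{consensus_update} over a $B$-connected sequence contracts the consensus disagreement $\|\vect{w}_i[n] - \overline{\vect{w}}[n]\|_2$ geometrically between mixing windows, while the innovation it receives is of order $\alpha[n]$ and vanishes; a telescoping/supermartingale argument then yields asymptotic agreement. For part (a), the gradient-tracking recursion \eqref{y2}, initialized at $\vect{y}_i[0]=\nabla g_i(\vect{w}_i[0])$, drives the local estimate $\widetilde{\boldsymbol{\pi}}_i[n]$ in \eqref{pi3} toward the true aggregate $\boldsymbol{\pi}_i[n]$ of \eqref{pi}, so the surrogate subproblems become asymptotically exact; combining this with a descent inequality for $U(\overline{\vect{w}}[n])$ obtained from the strong convexity of $\widetilde{g}_i$ (F1) and the descent lemma (A2, B1) shows that the measure of stationarity, namely the size of the best-response step $\|\widetilde{\vect{w}}_i[n]-\vect{w}_i[n]\|$, tends to zero along the sequence, so every limit point of $\{\overline{\vect{w}}[n]\}$ is stationary.

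The main obstacle, beyond the Lipschitz-via-coercivity bookkeeping, is the coupled analysis of the two error sources: the consensus disagreement and the gradient-tracking error both feed into the perturbed surrogate \eqref{best_resp_x_hat_2}, and one must show they decay fast enough, relative to $\alpha[n]$, for the inexact SCA scheme to still generate a convergent Lyapunov sequence. I would treat this, as in \citet{di2016next,sun2016distributed}, by bounding both errors in terms of $\alpha[n]$ and the mixing rate of $\vect{C}[n]$, and then closing the argument with the divergence $\sum_n \alpha[n]=\infty$ to rule out premature stagnation at non-stationary points.
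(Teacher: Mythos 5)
Your proposal takes essentially the same route as the paper: the paper's proof is a one-line reduction observing that Algorithm 1 is a special case of the SONATA extension of NEXT in \citet{sun2016distributed}, so that Theorem 1 there applies directly under Assumptions A--B, the $B$-strong connectivity, the double stochasticity of $\vect{C}[n]$, and the step-size rule. Your additional bookkeeping --- in particular the observation that the global Lipschitz continuity of $\nabla g_i$ is not automatic from A2 and B1 and must be secured on a compact sublevel set via coercivity (B3) --- is a correct and in fact more careful verification of the hypotheses than the paper itself provides, and your sketch of the consensus-contraction and gradient-tracking pillars simply reproduces the internals of the cited theorem rather than departing from it.
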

\begin{proof}
Algorithm 1 is a special case of an extension of the NEXT framework proposed in \citep{sun2016distributed} (i.e., the SONATA algorithm). Then, under the above assumptions on the NN model in (\ref{Dist_NN_training}), the network among agents, and the algorithm's parameters, all conditions of Theorem 1 in \citep{sun2016distributed} are satisfied, and the convergence result follows.
\end{proof}

\begin{algorithm}

\smallskip
$\textbf{Data}:$ $\vect{w}_{i}[0]$, $\vect{y}_i[0]= \nabla g_i[0]$,    $\boldsymbol{\pi}_{i}[0]=I\vect{y}_i[0]-\nabla g_i[0]$, $\forall i=1,\ldots ,I$, and $\{\vect{C}[n]\}_n$. Set $n=0$.\smallskip

\texttt{$\mbox{(S.1)}$}$\,\,$If $\mathbf{w}_i[{n}]$ satisfies a global termination
criterion: STOP;\smallskip

\texttt{$\mbox{(S.2)}$} \texttt{Local Optimization}:  Each agent $i$ \vspace{0.2cm}

\hspace{1.1cm} (a) computes $\widetilde{\vect{w}}_{i}[n]$ as:
\begin{align}\label{opt_prob_alg}
\widetilde{\vect{w}}_{i}[n]\,=&\,\underset{\vect{w}_{i}}{\arg\min} \;\;\widetilde{U}_{i}\left(\vect{w}_{i};\vect{w}_{i}[n],\widetilde{\boldsymbol{\pi}}_{i}[n]\right)  \,,
\end{align}

\hspace{1.1cm} \vspace{-0.05cm} (b) updates its local variable $\vect{z}_i[n]$:
\begin{equation}
\vect{z}_i[n]=\vect{w}_i[n]+\alpha[n]\left(\widetilde{\vect{w}}_{i}[n]-\vect{w}_i[n]\right) \,. \nonumber
\end{equation}

\vspace{-0.05cm}
\texttt{$\mbox{(S.3)}$} \texttt{Consensus update}:   Each agent $i$ \vspace{0.2cm}

  \hspace{1.1cm} (a) collects $\vect{z}_j[n]$ and $\vect{y}_j[n]$ from neighbors, \vspace{0.3cm}

  \hspace{1.1cm} (b) updates $\vect{w}_i[n]$ as:
  \begin{equation}
  \vect{w}_i[n+1] = \sum_{j=1}^I c_{ij}[n]\, \vect{z}_j[n] \nonumber \,,
  \end{equation}

  \hspace{1.1cm}(c) updates $\vect{y}_i[n]$ as:
  \begin{equation}
  \displaystyle \vect{y}_i[n+1]=\sum_{j=1}^I c_{ij}[n]\,\vect{y}_j[n]+\left(\nabla g_i[n+1]-\nabla g_i[n]\right) \nonumber\,,
  \end{equation}

  \hspace{1.1cm} (d) updates $\widetilde{\boldsymbol{\pi}}_{i}[n]$ \text{ as }:
  \begin{equation}
  \widetilde{\boldsymbol{\pi}}_{i}[n+1]=I\cdot \vect{y}_i[n+1]-\nabla g_i[n+1] \nonumber \,.
  \end{equation}

\vspace{0.3cm}
\texttt{$\mbox{(S.4)}$} $n\leftarrow n+1$, and go to \texttt{$\mbox{(S.1)}.$}

\protect\caption{\hspace{-2.5pt}\textbf{:} \label{alg:general}\textbf{NEXT Framework for Distributed Optimization of \eqref{Dist_NN_training}}}
\end{algorithm}

\noindent It is interesting to notice that convergence conditions are particularly loose. With respect to the network connecting the agents, it is enough to ensure connectivity over a finite (but arbitrary) \textit{union} of time instants. 

Step-size sequences satisfying the conditions can be derived easily, either fixed (and sufficiently small) as remarked in \citep{sun2016distributed}, or diminishing, e.g., using the following quadratically decreasing rule that was found particularly effective in our experiments:
\begin{equation}
\alpha[n] = \alpha[n-1]\left( 1 - \varepsilon\alpha[n-1] \right) \,,
\label{eq:step_size_sequence_example}
\end{equation}
where $\alpha[0], \varepsilon \in \left(0, 1\right]$  must be chosen by the user.

The per-iteration cost of the algorithm is clearly dominated by the solution of the surrogate optimization problem in (\ref{opt_prob_alg}). As we will see in the next section, the flexibility of the framework allows to select different choices of surrogate functions, typically impacting the complexity/performance tradeoff of the algorithm. The framework can be accelerated in two ways. First, we can parallelize the surrogate optimization in (\ref{opt_prob_alg}); this point will be discussed in Section \ref{sec:parallelizing_surrogate_optimization}. Second, at each iteration $n$, we can consider an \textit{inexact} solution of the surrogate problems in (\ref{opt_prob_alg}) within a user-specified error bound $\epsilon_i[n]$. In this case, it can be shown that convergence is still guaranteed, as long as the following condition is satisfied:
$\sum_{n=0}^{\infty} \alpha[n]\epsilon_i[n] < \infty$, $\forall i \in 1, \ldots, I$,
which establish a decaying rate of the error sequence over time. For further details, we refer to \citep[Theorem 4]{di2016next}.

\section{Strategies for Distributed NN Training}
\label{sec:practical}

In this section, we customize the NEXT framework for the solution of several distributed NN training problems. In particular, we focus on the choice of the surrogate functions $\widetilde{g}_i$ in (\ref{best_resp_x_hat_2}). From Proposition 1, we know that they must be chosen to satisfy F1-F3. Thus, we explore two general-purpose strategies that can be used to this end, before analyzing some practical algorithms resulting from the combination of these two strategies with common choices of the loss function and the regularization term. Essentially, the aim of $\widetilde{g}_i(\cdot)$ is to provide a strongly convex approximation of (the non-convex) $g_i$ around the current point, preserving (at least) the first-order information of the original function. Then, the most basic idea is to linearize the entire $g_i$, irrespectively of the actual choice of loss function $l$, as:
\begin{align}
\widetilde{g}_i^{\text{FL}}(\vect{w}_i; \vect{w}_i[n])  \;=\; & g_i(\vect{w}_i[n]) + \nabla g_i(\vect{w}_i[n])^T(\vect{w}_i - \vect{w}_i[n]) \, \nonumber \\ & +\frac{\tau}{2}\norm{\vect{w}_i - \vect{w}_i[n]}^2 \,,
\label{eq:full_linearization}
\end{align}
where the last term in \eqref{eq:full_linearization} is a proximal regularization term (with $\tau \geq 0$) used to ensure strong convexity; in what follows, we will refer to \eqref{eq:full_linearization} as the full linearization strategy (FL). In general, the use of the FL strategy leads to the formulation of surrogate problems in \eqref{opt_prob_alg} allowing for a simple, closed-form solution for most choices of regularization. At the same time, this strategy is throwing away most information of $g_i(\cdot)$, by only keeping first-order information on its gradient. For this reason, the resulting family of algorithms can possess a slow convergence speed, similarly to what happens with the use of (centralized) steepest descent optimization procedures.

To implement a more sophisticated approximation aimed at preserving the hidden convexity in the problem, we start noticing that the loss function in \eqref{eq:local_NN_cost} is composed of the summation of terms, each one given by the composition of an exterior convex function (i.e., the loss function $l$), and an interior nonlinear function (i.e., the NN model $f$). Then, a possible choice for $\widetilde{g}_i$ is to preserve the convexity of $l$, while linearizing $f$ around the current estimate $\vect{w}_i[n]$, and a generic input point $\vect{x}_{i,m}$, as:
\begin{align}
\widetilde{f}(\vect{w}_i;\vect{w}_i[n],\vect{x}_{i,m}) = f(\vect{w}_i[n]; \vect{x}_{i,m})  +  \nabla f(\vect{w}_i[n];\vect{x}_{i,m})^T(\vect{w}_i-\vect{w}_i[n]) \,.
\label{eq:f_tilde}
\end{align}
Then, the surrogate $\widetilde{g}_i$ is obtained as:
\begin{align}
 \widetilde{g}_i^{\text{PL}}(\vect{w}_i; \vect{w}_i[n]) =\;  \sum_{m\in \mathcal{S}_i} l(d_{i,m}, \widetilde{f}(\vect{w}_i;\vect{w}_i[n],\vect{x}_{i,m}))  +\dfrac{\tau}{2} \|\vect{w}_{i} - \vect{w}_{i}[n]\|^2,
 \label{eq:partial_linearization}
\end{align}
with $\tau\geq0$. We will refer to (\ref{eq:partial_linearization}) as the partial linearization (PL) strategy. It is straightforward to check that the surrogate $\widetilde{g}_i^{\text{PL}}$ in (\ref{eq:partial_linearization}) satisfies the properties F1-F3 required by Proposition 1.

In the remainder of the section, we consider a set of practical examples resulting from the use of our general framework.



\subsection{Case 1: ridge regression cost}
\label{sec:ridge_regression}

As a first example, we consider the use of a squared loss function combined with a classical $\ell_2$ norm regularization on the weights (also known as weight decay in the NN literature \citep{moody1995simple}):
\begin{equation}
l(a, b) \triangleq \left( a - b \right)^2 , \;\; r(\vect{w}) \triangleq \frac{\lambda}{2} \norm{\vect{w}}^2 \,,
\label{eq:ridge_regression}
\end{equation}
where $\lambda$ is a positive regularization parameter. Historically, this is the most common training criterion for NNs, and it is still widely used today for regression problems. Being equivalent to a nonlinear ridge regression, we borrow this terminology here.

Let us begin with the FL strategy in (\ref{eq:full_linearization}). Note that, thanks to the specific form of the regularizer, the resulting optimization problem in (\ref{opt_prob_alg}) is already strongly convex, so that we can set $\tau=0$. Then, using (\ref{eq:full_linearization}) and (\ref{eq:ridge_regression}), the surrogate problem in (\ref{opt_prob_alg}) reduces to the minimization of a positive definite quadratic function, which admits a simple closed form solution, given by:
\begin{empheq}[box=\widefbox]{equation}
\widetilde{\vect{w}}_i[n] = -\frac{1}{\lambda}\Bigl( \nabla \vect{g}_i[n] + \widetilde{\boldsymbol{\pi}}_i[n] \Bigr) \,,
\label{eq:ridge_FL}
\end{empheq}
where as before $\nabla \vect{g}_i[n]$ is used as a simplified notation for $\nabla_{\vect{w}_i} g_i(\vect{w}_i[n])$. Eq. \eqref{eq:ridge_FL} represents the first practical implementation of the framework in Algorithm 1 for distributed NN training. As we can notice from \eqref{eq:ridge_FL}, the FL strategy discards all information on the global cost function $U$ in (\ref{Dist_NN_training}), except for a first-order approximation. Thus, the descent direction in \eqref{eq:ridge_FL} will be proportional to the opposite of the gradient of $U$, thanks to the current estimate $\widetilde{\boldsymbol{\pi}}_i[n]$ of \eqref{pi} that is locally available at node $i$. As we will see in the numerical results, the performance of the resulting distributed scheme is similar to a centralized gradient method, sharing its advantages (low computational complexity) and its drawbacks (possible slow convergence speed).

We now proceed considering the PL strategy in (\ref{eq:partial_linearization}). To this aim, let us introduce the following `residual' terms:
\begin{equation}
r_{i,m}[n] = d_{i,m} - f(\vect{w}_i[n]; \vect{x}_{i,m}) + \vect{J}_{i,m}[n]^T \vect{w}_i[n] \,,
\label{eq:residual}
\end{equation}
where $\vect{J}_{i,m}[n] = \nabla_{\vect{w}_i} f(\vect{w}_i[n];\vect{x}_{i,m})$ is a $Q$-dimensional vector containing the derivatives of the NN output with respect to any single weight parameter. In the general case, it will be a matrix with one column per NN output. This quantity is sometimes denoted as the weight Jacobian \citep{blackwell2012neural}, since it measures the influence of a small parameter change on the output of the neural network.\footnote{Note that a single back-propagation step per iteration is needed to build the weight Jacobian, as discussed in \citet[Section 5.3.4]{bishop2006pattern}.} Now, using (\ref{eq:partial_linearization}) in (\ref{opt_prob_alg}), it is easy to show that the surrogate problem can be written again as the minimization of a positive definite quadratic form, given by:
\begin{equation}
\widetilde{\vect{w}}_i[n]  =  \underset{\vect{w}_i}{\arg\min}\;\; \vect{w}_i^T \Bigl( \vect{A}_i[n] + \frac{\lambda}{2}\vect{I} \Bigr)\vect{w}_i - 2\vect{b}_i[n]^T\vect{w}_i \,,
\label{eq:surrogate_ridge_pl}
\end{equation}
where \vspace{-.5cm}
\begin{align}
\vect{A}_i[n] &= \sum_{m\in S_i} \vect{J}_{i,m}[n]\vect{J}_{i,m}[n]^T \,, \label{eq:Ai}\\
\vect{b}_i[n] &= \sum_{m\in S_i} \vect{J}_{i,m}[n]r_{i,m}[n] - 0.5\,\widetilde{\boldsymbol{\pi}}_i[n] \label{eq:bi}\,.
\end{align}
As an interesting side note, in the NN literature the matrix \eqref{eq:Ai} is known as an outer product approximation to the Hessian matrix of $g_i(\cdot)$ (i.e., the error function local to agent $i$), which is obtained by assuming that the error is uncorrelated with the second derivative of the network's output \citep[Section 5.4.2]{bishop2006pattern}. Finally, solving the resulting minimization problem in (\ref{eq:surrogate_ridge_pl}), the solution $\widetilde{\vect{w}}_i[n]$ of the surrogate problem, to be used in (\ref{opt_prob_alg}), is given by:
\begin{empheq}[box=\widefbox]{equation}
\widetilde{\vect{w}}_i[n] = \Bigl( \vect{A}_i[n] + \frac{\lambda}{2}\vect{I} \Bigr)^{-1}\vect{b}_i[n] \,.
\label{eq:ridge_PL}
\end{empheq}
Differently from the FL strategy, whose computational complexity is linear in the number of parameters, in this case solving the surrogate problem is of the order $\mathcal{O}(Q^3)$, where $Q$ is the number of adaptable NN parameters, due to the matrix inversion step. Nevertheless, as we will see in the numerical results, the resulting descent direction provides a very large improvement in terms of convergence speed. Additionally, this strategy can benefit from a larger relative speedup when employing the parallelization strategy described in Section \ref{sec:parallelizing_surrogate_optimization}.

\subsection{Case 2a: squared error with weight sparsity}
\label{sec:lasso}

As a second example, let us consider again the use of a squared loss term $l$ in (\ref{eq:ridge_regression}), combined this time with a sparsity promoting term given by the $\ell_1$ norm on the weight vector, i.e.,
\begin{equation}\label{eq:ell1}
r(\vect{w}) \triangleq \lambda \norm[1]{\vect{w}} = \lambda \sum_{k=1}^Q | w_k | \,.
\end{equation}
The $\ell_1$ norm promotes sparsity of the weight vector, acting as a convex approximation of the non-convex, non-differentiable $\ell_0$ norm \citep{tibshirani1996regression}. While there exists customized algorithms to solve non-convex $\ell_1$ regularized problems \citep{ochs2015iteratively}, it is common in the NN literature to apply first-order procedures (e.g., stochastic descent with momentum) followed by a thresholding step to obtain sparse solutions \citep{bengio2012practical}. In what follows, we illustrate the customization of the NEXT framework to this use case, using both FL and PL strategies in (\ref{eq:full_linearization}) and (\ref{eq:partial_linearization}), respectively. In the FL case, using (\ref{eq:full_linearization}) and (\ref{eq:ell1}), with $\tau>0$ to ensure strong convexity, the problem in (\ref{opt_prob_alg}) can be written as the minimization of the sum of $q$ independent functions, as follows:
\begin{align}
\widetilde{\vect{w}}_i[n]  =\, & \underset{\vect{w}_i}{\arg\min}\; \sum_{k=1}^q \Bigl\{ \left( \nabla g_{ik}[n] + \widetilde{\pi}_{ik}[n] - \tau w_{ik}[n] \right) w_{ik} \Bigr. \,   \nonumber \\ & \Bigl.  +\frac{\tau}{2} w_{ik}^2 + \lambda |w_{ik}| \Bigr\} \,.
\label{eq:surrogate_lasso_fl}
\end{align}
After some easy calculations, the solution of the optimization problem in \eqref{eq:surrogate_lasso_fl} is given by the closed form expression:
\begin{empheq}[box=\widefbox]{equation}
\widetilde{\vect{w}}_i[n] = \mathcal{S}_{\lambda/\tau} \left( \vect{w}_{i}[n] - \frac{1}{\tau} \nabla \, \vect{g}_i[n]  - \frac{1}{\tau}\widetilde{\boldsymbol{\pi}}_i[n] \right) \,,
\label{eq:minimizer_surrogate_LASSO_parallel2}
\end{empheq}
where \vspace{-.4cm}
\begin{align}
\mathcal{S}_{\gamma}(z)={\rm sign}(z)\max(0,|z|-\gamma),
\label{eq:soft_thresh}
\end{align}
is the (component-wise) soft thresholding function.

%

In the PL case, using (\ref{eq:partial_linearization}) and (\ref{eq:ell1}), the problem in (\ref{opt_prob_alg}) can be cast as an $\ell_1$ regularized quadratic program, given by:
\begin{empheq}[box=\widefbox]{align}
\widetilde{\vect{w}}_i[n] & = \underset{\vect{w}_i}{\arg\min} \Biggl\{ \vect{w}_i^T \left( \vect{A}_i[n] + \frac{\tau}{2}\vect{I} \right) \vect{w}_i \,- \Biggr. \nonumber \\  & \Biggl.  2\left( \vect{b}_i[n] + 0.5\tau\vect{w}_i[n] \right)^T \vect{w}_i + \lambda \norm[1]{\vect{w}_i} \Biggr\} \,,
\label{eq:surrogate_lasso_pl}
\end{empheq}
where $\vect{A}_i[n]$ and $\vect{b}_i[n]$ are given by (\ref{eq:Ai}) and (\ref{eq:bi}), respectively.
This is the first case we encounter where the solution of the optimization step cannot be expressed immediately in a closed form. Nevertheless, problem \eqref{eq:surrogate_lasso_pl} is given by the sum of a strongly convex function and an $\ell_1$ term, and many efficient strategies can be used for its approximate solution, including FISTA \citep{beck2009fast}, coordinate descent methods \citep{cevher2014convex}, and several others.

\subsection{Case 2b: group sparse penalization}
\label{sec:group_sparse}

The formulation introduced in Sec. \ref{sec:lasso} can be easily extended to handle a \textit{group sparse} penalization, which allows the selective removal of entire neurons during the training process, see, e.g., \citep{scardapane2017group}. The basic idea is to force all the outgoing weights from a neuron to be \textit{simultaneously} either non-zero or zero; the latter resulting in the direct removal of the neuron itself. Note that a neuron here can correspond to an input neuron, to a neuron in a hidden layer, or to a bias term, thus allowing the removal of input features, hidden neurons, and bias terms from the trained network (see \citep{scardapane2017group} for details). To this aim, let us suppose that the neurons are ordered and indexed as $1, \ldots, P$. Also, let us denote by $\vect{w}_{i,p}, \, p=1,\ldots,P$, the subset of weights of $\vect{w}_i$ collecting all connections between the $p$th neuron with all the neurons in the following layer. Group sparsity can then be imposed by choosing in (\ref{Dist_NN_training}) the following regularization term:
\begin{equation}
r(\vect{w}) \triangleq \lambda \displaystyle \sum_{p=1}^P \rho_p \norm{\vect{w}_{p}} \,,
\label{eq:reg_group_lasso}
\end{equation}
where $\rho_p = \sqrt{r_p}$ are positive constants, $p=1,\ldots,P$, with $r_p$ denoting the dimensionality of $\vect{w}_p$.

Let us now analyze the customization of our framework when the FL strategy in (\ref{eq:full_linearization}) is applied. Then, let us define $\vect{a}_i[n] = \nabla \vect{g}_{i}[n] + \widetilde{\boldsymbol{\pi}}_{i}[n] - \tau \vect{w}_{i}[n]$, denoting with $\vect{a}_{i,p}$ the restriction of $\vect{a}_i$ to the indexes associated with the $p$th group. Thus, the surrogate problem in (\ref{opt_prob_alg}) writes as:
\begin{empheq}[box=\widefbox]{align}
\widetilde{\vect{w}}_i[n] = \underset{\vect{w}_i}{\arg\min} \sum_{p=1}^P \Biggl\{ \vect{a}_{i,p}^T\vect{w}_{i,p} + \Biggr. \Biggl. \frac{\tau}{2}\norm{\vect{w}_{i,p}}^2 + \lambda\rho_p \norm{\vect{w}_{i,p}} \Biggr\} \,.
\label{eq:group_lasso_fl}
\end{empheq}
As we can notice from (\ref{eq:group_lasso_fl}), the cost function is given by a summation of costs, each one dependent on a single neuron. Also in this case, even if problem (\ref{eq:group_lasso_fl}) cannot be solved in closed form, it is possible to implement very fast and efficient algorithms for its solution, see, e.g., \citep{schmidt2010graphical,cevher2014convex}. Furthermore, in the case each agent has a multi-core architecture, the structure of (\ref{eq:group_lasso_fl}) makes straightforward the parallelization of computation, where each local processor can take care only of a subset of neurons. Finally, considering the PL strategy, the resulting formulation is equivalent to \eqref{eq:surrogate_lasso_pl}, with the only difference that \eqref{eq:reg_group_lasso} replaces the $\ell_1$ norm. Again, many of the techniques mentioned before can be used to solve also the resulting (group sparse) strongly convex problem.

%
%

\subsection{Case 3: cross-entropy loss}
\label{sec:cross_entropy}

As an additional example, let us consider the case of binary classification, i.e., $d_{i,m} \in \left\{0,1\right\}$. Then, assuming the output of the NN is limited between $0$ and $1$, a standard optimization criterion involves the cross-entropy loss function in (\ref{Dist_NN_training}), i.e.:
\begin{equation}
l(a,b) \triangleq - a\log(b) - (1-a)\log(1-b) \,.
\label{eq:cross_entropy_loss}
\end{equation}
In this case, using the FL strategy in \eqref{eq:full_linearization}, we obtain the same closed form solution as in
\eqref{eq:ridge_FL} (or \eqref{eq:surrogate_lasso_fl}) by using the $\ell_2$ (or $\ell_1$) regularization, with the only difference being that each function $g_i$ in (\ref{eq:local_NN_cost}) now depends on the cross-entropy loss in \eqref{eq:cross_entropy_loss}. The PL case, instead, requires some additional care. In particular, although the NN output is bounded, the same is not true for its linear approximation \eqref{eq:f_tilde}. Simply substituting \eqref{eq:f_tilde} in \eqref{eq:cross_entropy_loss} might result in undefined values, since the argument of the logarithm must be positive. To tackle this issue, let us notice that in this case the NN model can be written as:
\begin{equation}
f(\vect{w}; \vect{x}) = \sigma\left(f_L(\vect{w}; \vect{x})\right) \,,
\end{equation}
where $\sigma(\cdot)$ is a squashing function (without loss of generality, we assume it to be a sigmoid), and $f_L$ is the NN output up to (but not including) the activation function of the output neuron. The sigmoid $\sigma(z)$ is non-convex, but its internal composition with the cross-entropy loss in \eqref{eq:cross_entropy_loss} is convex, see, e.g., \citep{boyd2004convex}. Exploiting such hidden convexity, we can write the surrogate problem in (\ref{opt_prob_alg}), while satisfying the conditions F1-F3 in Proposition 1, as follows:
\begin{empheq}[box=\widefbox]{align}
\widetilde{\vect{w}_i}[n] \,= \,& \underset{\vect{w}_i}{\arg\min}  \Biggl\{  l\left(d_{i,m}, \sigma\left(\widetilde{f}_L(\vect{w}_i;\vect{w}_i[n],\vect{x}_{i,m})\right)\right) \, \Biggr. \nonumber \\ & \Biggl. +\boldsymbol{\pi}_i[n]^T \vect{w}_i + r(\vect{w}_i) + \frac{\tau}{2} \norm{ \vect{w}_i - \vect{w}_i[n] }^2 \Biggr\} \,,
\label{eq:surrogate_cross_entropy_pl}
\end{empheq}
where $\widetilde{f}_L(\cdot)$ is the first-order linearization of $f_L$ defined as in \eqref{eq:f_tilde}. Also in this case we cannot make any further simplifications although, once again, the strong convexity of the problem makes it relatively easy to be solved (roughly equivalent to a traditional logistic regression).


\section{Parallelizing the Local Optimization}
\label{sec:parallelizing_surrogate_optimization}

In this section, we explore how each agent can parallelize the local optimization in (\ref{opt_prob_alg}), when having access to $C$ separate computing machines (e.g., cores, or computers in a cloud). As we stated in the introduction, this effectively gives rise to algorithms that are both distributed (across agents) and parallel (inside each agent) in nature. To this end, suppose that the weight vector $\vect{w}_i$ is partitioned in $C$ non-overlapping blocks $\vect{w}_{i,1}, \ldots, \vect{w}_{i,C}$, so that $\vect{w}_i = \bigcup_{c=1}^C \vect{w}_{i,c}$ (assuming that the union keeps the original order). Note that we use a similar notation as in Section \ref{sec:group_sparse} to identify a single group, i.e., using an additional subscript under the variable. For convenience, we also define $\vect{w}_{i,-c}[n]\triangleq (\vect{w}_{i,p}[n])_{1=p\neq c}^C$ as the tuple of all blocks excepts the $c$-th one, and similarly for all other variables. Additionally, we assume that the regularization term $r$ is block separable, i.e., $r(\vect{w}_i)=\sum_{c=1}^C r_{i,c}(\vect{w}_{i,c})$ for some $r_{i,c}$. This is true for the $\ell_2$ and $\ell_1$ norms, and it holds true also for the group sparse norm in (\ref{eq:reg_group_lasso}) if we choose the groups in a consistent way. Then, the key idea is to decompose (\ref{opt_prob_alg}) on a per-core-basis, and solve a sequence of (strongly) convex low-complexity subproblems, whereby all processors of agent $i$ update their blocks in parallel. To this aim, we build a surrogate function $\widetilde{g}_i$ that additively decomposes over the different cores, i.e.:
\begin{equation}
\widetilde{g}_i(\vect{w}_i; \vect{w}_i[n])=\sum_{c=1}^C \widetilde{g}_{i,c}(\vect{w}_{i,c};\vect{w}_{i,-c}[n]) \,,
\label{eq:parallel_surrogate_function}
\end{equation}
where each $\widetilde{g}_{i,c}(\cdot; \vect{w}_{i,-c}[n])$ is any surrogate function satisfying conditions F1-F3 on the variable $\vect{w}_{i,c}$. It is easy to check that the surrogate $\widetilde{g}_i$ in (\ref{eq:parallel_surrogate_function}) satisfies F1-F3 on the variable $\vect{w}_i$. Given (\ref{eq:parallel_surrogate_function}), each core $c$ can then minimize its corresponding term independently of the others, and their solutions can be aggregated to form the final solution vector. In the case of the FL strategy, parallelization is not particularly effective. In fact, the final solution is given by simple aggregation of vectors as in \eqref{eq:ridge_FL}, whose computation has linear complexity with respect to the size of $\vect{w}_i$, eventually with a point-wise application of the thresholding operator in \eqref{eq:soft_thresh}. However, the (linear) cost of solving the surrogate problems at each core is easily overshadowed by the need of computing gradients via a backpropagation step.
On the other side, parallelization can largely reduce computational complexity when using the PL strategy. To give an example of application of the proposed methodology, in the sequel we illustrate how to parallelize the local optimization in the case of a ridge regression cost as in Sec. \ref{sec:ridge_regression}. In particular, let us consider the surrogate function in \eqref{eq:surrogate_ridge_pl}. To obtain the surrogate function associated to each core $c$, we fix in \eqref{eq:surrogate_ridge_pl} all the variables $\vect{w}_{i,-c}[n]$, such that the resulting function depends only on $\vect{w}_{i,c}$. The surrogate associated to core $c$ is then given by:
\begin{align}\label{eq:rige_parallel}
\widetilde{U}_{i,c}(\vect{w}_{i,c}; \vect{w}_{i,-c}[n]) & = \vect{w}_{i,c}^T \Bigl( \vect{A}_{i,c,c}[n] + \frac{\lambda}{2}\vect{I} \Bigr)\vect{w}_{i,c} \, - \nonumber \\ & 2\left(\vect{b}_{i,c}[n] - \vect{A}_{i,c,-c}[n]\vect{w}_{i,-c}[n] \right)^T\vect{w}_i \,,
\end{align}
where $\vect{A}_{i,c,c}[n]$ is the block (rows and columns) of the matrix $\vect{A}_i[n]$ in (\ref{eq:Ai}) corresponding to the $c$-th partition, whereas $\vect{A}_{i,c,-c}[n]$ takes the rows corresponding to the $c$-th partition and all the columns not associated to $c$. The minimum of (\ref{eq:rige_parallel}) is:
\begin{empheq}[box=\widefbox]{equation}\label{eq:rige_parallel_solution}
\widetilde{\vect{w}}_{i,c}[n] = \left( \vect{A}_{i,c,c}[n] + \frac{\lambda}{2}\vect{I} \right)^{-1}\left(\vect{b}_{i,c}[n] - \vect{A}_{i,c,-c}[n]\vect{w}_{i,-c}[n] \right) \,,
\end{empheq}
$c=1,\ldots,C$, and the overall solution is given by $\widetilde{\vect{w}}_{i}[n]=(\widetilde{\vect{w}}_{i,c}[n])_{c=1}^C$.
As we can see from (\ref{eq:rige_parallel_solution}), the effect of the parallelization is evident: At each iteration $n$, each core has to invert a matrix having (approximately) size $\frac{1}{C}$ of the original one in (\ref{eq:ridge_PL}), thus remarkably reducing the overall computational burden. Similar arguments can be used also to parallelize the formulations in \eqref{eq:surrogate_lasso_pl}, \eqref{eq:group_lasso_fl}, and \eqref{eq:surrogate_cross_entropy_pl}.

\section{Experimental Validation}
\label{sec:experimental_validation}

In this section, we assess the performance of the proposed method via numerical simulations. We begin by analyzing the test error of the solutions obtained by the algorithms for some representative regression and classification datasets in Sections \ref{sec:results_regression_datasets} and \ref{sec:results_classification_datasets}, respectively. Then, we consider the convergence behaviors of the proposed framework, comparing it to centralized and distributed counterparts, in Section \ref{sec:analysis_of_convergence}. In Section \ref{sec:exploiting_parallelization}, we describe the speed-up achieved thanks to the parallelization strategy outlined before. Finally, we consider large-scale inference in Section \ref{sec:large_scale_experiment}. Python code to repeat the experiments is available under open-source license on the web.\footnote{\url{https://bitbucket.org/ispamm/parallel-and-distributed-neural-networks}} The code is built upon the Theano \citep{bergstra2010theano} and Lasagne\footnote{\url{https://github.com/Lasagne/Lasagne}} libraries.

\subsection{Experimental setup}
\label{sec:experimental_setup}

In all experiments, the original dataset is normalized so that both inputs and outputs lie in the $\left[0, 1\right]$ range. Then, the dataset is partitioned as follows. First, $20\%$ of the dataset is kept separate to test the algorithms. The remaining $80\%$ is partitioned evenly among a randomly generated network of $10$ agents. For simplicity, we consider networks with time-invariant, symmetric connectivity, such that every pair of agents have a $20\%$ probability of being connected, with the only requirement that the overall network is connected. An example of such connectivity is shown in Fig. \ref{fig:network_example}.

\begin{figure}
\centering
\includegraphics[scale=0.7]{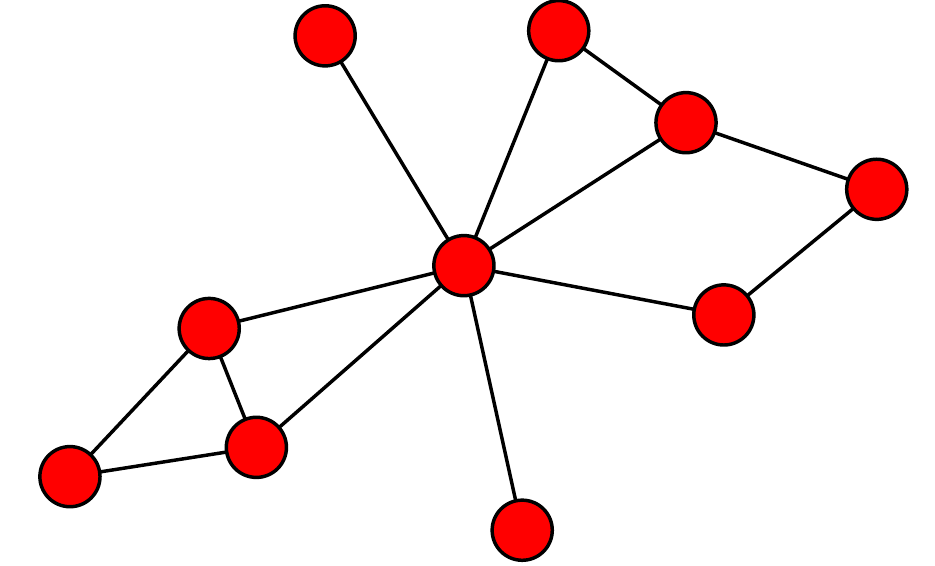}%
\caption{Example of communication network with $10$ agents (represented by red dots), possessing a sparse, time-invariant, symmetric connectivity.}
\label{fig:network_example}
\end{figure}

We have selected the weight coefficients in (\ref{weights}) using the \textit{Metropolis-Hastings} strategy \citep{lopes2008diffusion}:
\begin{equation}
C_{ij} =
	\begin{cases}
	\frac{1}{\max\{\delta_i, \delta_j\} + 1} & \; i \neq j,\, j \in \mathcal{N}_i \\
	1 - \sum_{j \in \mathcal{N}_i} \frac{1}{\max\{\delta_i, \delta_j\} + 1} & \; i = j \\
	0 & \; \text{otherwise}
	\end{cases}\,
\end{equation}
where $\delta_i$ is the degree of node $i$. It it easy to check that this choice of the weight matrix satisfies the convergence conditions of the framework. Missing data is handled by removing the corresponding example. All experiments are repeated $25$ times by varying the data partitioning and the NN initialization.

Regarding the NN structure, we use hyperbolic tangent nonlinearities in all neurons, except for classification problems, where we use a sigmoid nonlinearity in the output neuron. The weights of the NN are initialized independently at every agent using the normalized strategy described by \citet{glorot2010understanding}. All algorithms run for a maximum of $1000$ epochs. In all the figures illustrating the results of the distributed strategies, whenever not explicitly stated, we consider the evolution of the average weight vector $\overline{\vect{w}}[n]$ as defined in Proposition $2$.

\subsection{Results with regression datasets}
\label{sec:results_regression_datasets}

\begin{center}
\begin{table}
{\centering\hfill{}
\setlength{\tabcolsep}{4.5pt}
\renewcommand{\arraystretch}{1.3}
\begin{small}
\begin{tabular}{lccccc}   
\toprule
\textbf{Dataset} & \textbf{Features} & \textbf{Samples} & \textbf{NN Topology} & $\boldsymbol{\lambda}$ & \textbf{Source} \\
\midrule
Boston & $13$ & $506$ & $10$ & $10^{-1}$ & UCI \\
Kin8nm & $7$ & $8192$ & $8/5$ & $10^{-2}$ & DELVE \\
Wine & $10$ & $4898$ & $12$ & $10^{-2}$ & UCI \\
\bottomrule
\end{tabular}
\end{small}
}
\hfill{}
\caption{Schematic description of the datasets used for regression. For the NN topology, $x/y$ denotes a NN with two hidden layers of dimensions $x$ and $y$ respectively.}
\label{tab:datasets_regression}
\end{table}
\end{center}

\vspace{-2em}
We start considering three representative regression datasets, whose characteristics are summarized in Table \ref{tab:datasets_regression}. Boston (also known as the Housing dataset) is the task of predicting the median value of a house based on a set of features describing it \citep{quinlan1993combining}. Kin8nm is a member of the kinematics family of datasets\footnote{\url{http://www.cs.toronto.edu/~delve/data/kin/desc.html}}, having high non-linearity and a medium amount of additive noise. Finally, Wine concerns predicting the subjective quality of a (white) wine based on a wide set of chemical features \citep{cortez2009modeling}. The fourth and fifth columns in Table \ref{tab:datasets_regression} describe the parameters of the NN in terms of hidden neurons and regularization coefficients. These parameters are chosen based on an analysis of previous literature in order to obtain state-of-the-art results. However, we underline that our aim is to compare different solvers for the same NN optimization problem, and for this reason only relative differences in accuracy are of concern.

In particular, we compare the results of our algorithms with respect to five state-of-the-art \textit{centralized} solvers, in terms of mean-squared error (MSE) over the test data, when solving the global optimization problem with the ridge regression cost in \eqref{eq:ridge_regression}. Note that these solvers would not be available in a distributed scenario, and are only used for comparison purposes as optimal benchmarks. Specifically, we consider the following algorithms:
\begin{description}
	\item[\textbf{Gradient descent} (GD)]: this is a simple first-order steepest descent procedure with fixed step-size.
	\item[\textbf{AdaGrad}] \citep{duchi2011adaptive} : differently from GD, this algorithm employs different adaptive step-sizes per weight, which evolve according to the relative values of the gradients' updates.
	\item[RMSProp]: equivalent to an AdaDelta variant \citep{zeiler2012adadelta}, it also considers adaptive independent step-sizes; however they are adapted based on shorter time windows in order to avoid exponentially decreasing schedules.
	\item[Conjugate gradient (CG)]: this is the Polak-Ribiere variant of the nonlinear conjugate gradient algorithm \citep{nocedal2006numerical}, implemented in the SciPy library.\footnote{\url{https://scipy.org/}}
	\item[L-BFGS]: a low-memory version of the second-order BFGS algorithm \citep{byrd1995limited}, keeping track of an approximation to the full Hessian matrix, also implemented in the SciPy library.
\end{description}

In addition, we consider the behavior of a centralized implementation of the PL strategy in \eqref{eq:ridge_PL}, denoted as PL-SCA, resulting in a novel centralized algorithm. In particular, assuming all data is available on a single location, we can consider a centralized equivalent of \eqref{eq:Ai} and \eqref{eq:bi} as:
\begin{align}
\vect{A}[n] & = \sum_{i=1}^I \vect{A}_i[n] \,, \label{eq:A} \\
\vect{b}[n] & = \sum_{i=1}^I \sum_{m \in \mathcal{S}_i} \vect{J}_{i,m}[n]r_{i,m}[n] \,.
\end{align}
Following similar arguments as in Sections \ref{sec:next} and \ref{sec:ridge_regression}, PL-SCA is defined by the iterative application of the following recursion:
\begin{align}
\widetilde{\vect{w}}[n] & = \Bigl( \vect{A}[n] + \frac{\lambda}{2}\vect{I} \Bigr)^{-1}\vect{b}[n] \,, \\
\vect{w}[n+1] & = \vect{w}[n] + \alpha[n]\left( \widetilde{\vect{w}}[n] - \vect{w}[n] \right) \,.
\end{align}
The centralized implementation of the FL strategy is almost equivalent to GD, so we do not consider it separately. For the distributed algorithms, we consider both the PL strategy in \eqref{eq:ridge_PL}, denoted as PL-NEXT, and the FL strategy in \eqref{eq:ridge_FL}, denoted as FL-NEXT. For PL-SCA, PL-NEXT and FL-NEXT we use the quadratically decreasing step-size sequence defined in \eqref{eq:step_size_sequence_example}. To have a fair comparison, the parameters of the step-size sequence in (\ref{eq:step_size_sequence_example}) were tuned at hand in order to select the fastest convergence behavior for all algorithms.


\begin{sidewaystable}
{\centering\hfill{}
\setlength{\tabcolsep}{4pt}
\renewcommand{\arraystretch}{1.3}
\begin{footnotesize}
\begin{tabular}{lcccccccc}   
\toprule
\textbf{Dataset} & \multicolumn{6}{c}{\textbf{Centralized}} & \multicolumn{2}{c}{\textbf{Distributed}} \\ \cmidrule(lr){2-7} \cmidrule(lr){8-9}
								  & GD & AdaGrad & RMSProp & CG & L-BFGS & PL-SCA & FL-NEXT & PL-NEXT \\
\midrule
Boston & $0.010 \pm 0.001$ & $0.009 \pm 0.001$ & $0.009 \pm 0.001$ & $\vect{0.007 \pm 0.001}$ & $\vect{0.007 \pm 0.001}$ & $\vect{0.007 \pm 0.001}$ & $0.010 \pm 0.001$ & $\vect{0.007 \pm 0.001}$ \\
Kin8nm & $0.019 \pm \approx 0$ & $0.018 \pm \approx 0$ & $0.015 \pm 0.001$ & $0.011 \pm 0.001$ & $\vect{0.009 \pm 0.001}$ & $\vect{0.009 \pm \approx 0}$ & $0.019 \pm \approx 0$ & $\vect{0.009 \pm \approx 0}$ \\
Wine & $0.018 \pm 0.001$ & $0.016 \pm 0.001$ & $0.017 \pm 0.001$ & $0.038 \pm 0.019$ & $\vect{0.014 \pm \approx 0}$ & $\vect{0.014 \pm 0.001}$ & $0.017 \pm 0.001$ & $\vect{0.014 \pm 0.001}$ \\
\bottomrule
\end{tabular}
\end{footnotesize}
}
\hfill{}
\caption{Regression results (in terms of mean-squared error on the test set) of different algorithms. Columns $2$-$7$ are centralized algorithms reported for comparison. Columns $8$-$9$ are the two implementations of the proposed distributed framework. Best results for both groups are highlighted in bold. See the text for a description of the acronyms.}
\label{tab:results_regression_problems}
\end{sidewaystable}

The results on this set of experiments are provided in Table \ref{tab:results_regression_problems}, both in terms of the mean and the standard deviation. Several conclusions can be drawn from the table. For the centralized algorithms, L-BFGS, being a second-order algorithm, is able to obtain the best accuracies, and it is matched only by CG in the Boston case (in the next section we will show some plots of the convergence behavior of the different algorithms). Interestingly, PL-SCA is able to match L-BFGS in all cases, complementing our previous observation that the matrix in \eqref{eq:A} acts as an approximation of the Hessian matrix. For the distributed algorithms, we see similar distinctions between FL-NEXT and PL-NEXT. Specifically, PL-NEXT has comparable accuracies with respect to L-BFGS, while FL-NEXT obtains errors comparable to GD and AdaGrad. Clearly, the improved convergence comes at the cost of a higher computational burden (due to the need of inverting a matrix in \eqref{eq:ridge_PL}), in line with the equivalent difference in the centralized case. Summarizing, we see that FL-NEXT and PL-NEXT represent viable algorithms for distributed scenarios, providing a relative trade-off with respect to convergence and computational requirements, and matching the respective centralized implementations that are not viable in the distributed setting treated in this paper. Importantly, this is also achieved with a minimal (or non-existent) increase in term of variance. We defer a statistical analysis of the results to the next section, in order to consider also the classification datasets.

\subsection{Results with classification datasets}
\label{sec:results_classification_datasets}

In this section, we analyze the performance of the distributed algorithms when applied to two classification problems, whose characteristics are briefly summarized in Table \ref{tab:datasets_classification}. Wisconsin is a medical classification task, aimed at separating cancerous cells from non-cancerous ones from several features describing the cell nucleus.\footnote{\url{https://archive.ics.uci.edu/ml/datasets/Breast+Cancer+Wisconsin+(Diagnostic)}} The Cardiotocography (CGT) dataset is another clinical problem, where we wish to infer suspect/pathological fetuses from several biometric signals.\footnote{\url{https://archive.ics.uci.edu/ml/datasets/Cardiotocography}} In this case, we solve the global optimization problem with the cross-entropy loss in \eqref{eq:cross_entropy_loss} and a squared regularization term. We analyze the behavior of both the FL strategy and the PL strategy when compared to the state-of-the-art solvers described in the previous section. For PL-NEXT, the local surrogate problem in \eqref{eq:surrogate_cross_entropy_pl} is solved with AdaGrad, run for a maximum of $50$ iterations (with an initial step-size of $0.1$), or until the gradient norm is below a fixed threshold of $10^{-6}$. For the local optimization at each agent, we perform a `warm start' from the current estimate $\vect{w}_i[n]$.

\begin{center}
\begin{table}
{\centering\hfill{}
\setlength{\tabcolsep}{4pt}
\renewcommand{\arraystretch}{1.3}
\begin{small}
\begin{tabular}{lccccc}   
\toprule
\textbf{Dataset} & \textbf{Features} & \textbf{Samples} & \textbf{NN Topology} & $\boldsymbol{\lambda}$ & \textbf{Source} \\
\midrule
Wisconsin & $9$ & $689$ & $10$ & $10^{-0.5}$ & UCI \\
CTG & $28$ & $2126$ & $15/8$ & $10$ & UCI \\
\bottomrule
\end{tabular}
\end{small}
}
\hfill{}
\caption{Schematic description of the datasets used for classification. See Table \ref{tab:datasets_regression} and the text for details on the NN topology.}
\label{tab:datasets_classification}
\end{table}
\end{center}

\begin{sidewaystable*}
{\centering\hfill{}
\setlength{\tabcolsep}{4pt}
\renewcommand{\arraystretch}{1.3}
\begin{small}
\begin{tabular}{lccccccc}   
\toprule
\textbf{Dataset} & \multicolumn{5}{c}{\textbf{Centralized}} & \multicolumn{2}{c}{\textbf{Distributed}} \\ \cmidrule(lr){2-6} \cmidrule(lr){7-8}
								  & GD & AdaGrad & RMSprop & CG & L-BFGS & FL-NEXT & PL-NEXT \\
\midrule
Wisconsin & $\vect{0.025 \pm 0.009}$ & $0.027 \pm 0.006$ & $0.027 \pm 0.005$ & $0.028 \pm 0.006$ & $0.028 \pm 0.006$ & $0.027 \pm 0.007$ & $\vect{0.025 \pm 0.009}$ \\
CTG & $0.084 \pm 0.010$ & $0.082 \pm 0.010$ & $0.087 \pm 0.014$ & $\vect{0.083 \pm 0.011}$ & $\vect{0.083 \pm 0.010}$ & $0.087 \pm 0.007$ & $\vect{0.084 \pm 0.009}$ \\
\bottomrule
\end{tabular}
\end{small}
}
\hfill{}
\caption{Classification results (in terms of misclassification rate on the test set) of different algorithms. Columns $2$-$6$ are centralized algorithms reported for comparison. Columns $7$-$8$ are the two implementations of the proposed distributed framework. Best results for both groups are highlighted in bold. See the text for a description of the acronyms.}
\label{tab:results_classification_problems}
\end{sidewaystable*}

\vspace{-2.5em}
The overall results are given in Table \ref{tab:results_classification_problems} in terms of misclassification rate. We see that, in this case, first-order algorithms are generally competitive, with the GD solver obtaining the best accuracy among the centralized solvers for the Wisconsin dataset, and CG/L-BFGS obtaining a slightly better result in the CTG case. Nevertheless, the distributed strategies are again able to obtain state-of-the-art results, with PL-NEXT consistently obtaining the lowest misclassification rate, and FL-NEXT ranging close to AdaGrad and RMSProp. In order to formalize the intuition that PL-NEXT is generally converging to a better minimum than FL-NEXT, we perform a Wilcoxon signed-rank test \citep{demsar2006statistical} on the results over both regression and classification datasets. The difference is found to be significant with a $p=0.05$ confidence value (although the number of datasets under consideration is relatively small). We can reasonably conclude that PL-NEXT seems a better choice in terms of accuracy, if it is possible for the agents to cope with the increased computational cost.

\subsection{Analysis of convergence}
\label{sec:analysis_of_convergence}

In a distributed setting, the final accuracy is not the only parameter of interest. We are also concerned on how fast this accuracy is obtained, because the convergence speed has a direct impact on the communication burden over the network of agents. As we mentioned in the introduction, in the case of general non-differentiable regularizers $r$, there is no ready-to-use alternative for comparing our proposed algorithms. However, in the specific case where the regularization function $r$ satisfies assumption B1, we can easily adapt the framework introduced in \citet{bianchi2013convergence}, resulting in a simple method that we denote as the distributed gradient (DistGrad) algorithm. Similarly to the NEXT framework, DistGrad alternates between a local optimization phase and a communication phase. In the optimization phase, each agent iteratively updates its own estimate according to a local gradient descent step as follows:
\begin{equation}
\vect{z}_{i}[n] = \vect{w}_i[n] - \eta[n]\left( \nabla \vect{g}_i[n] + \frac{1}{I} \nabla r(\vect{w}_i[n]) \right) \,,
\label{eq:distgrad}
\end{equation}
where $\eta[n]$ is the step-size sequence. In the communication phase, the local estimates $\vect{z}_{i}[n]$ are combined similarly to \eqref{consensus_update}. DistGrad can be seen as a simplified version of the FL strategy, where we do not consider the dynamic consensus step (i.e., Step 3 of NEXT). For fairness of comparison, we use the step-size rule in \eqref{eq:step_size_sequence_example}, and the same strategy for selecting the combination coefficients in (\ref{weights}).

\begin{figure*}
	\centering
	\subfloat[Cost function (Boston)]{\includegraphics[scale=0.8]{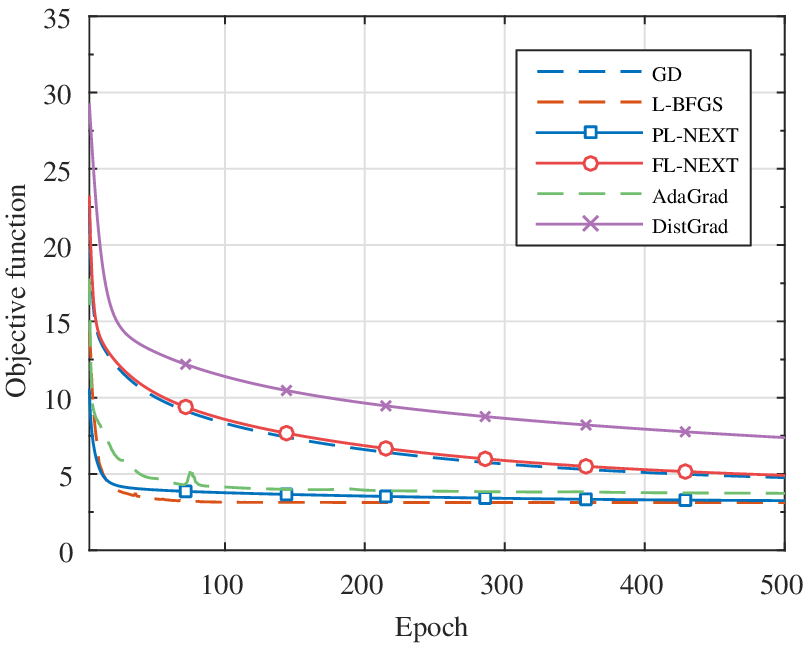}%
		\label{fig:boston_obj}}
	\hfil
	\subfloat[Cost function (Kin8nm)]{\includegraphics[scale=0.8]{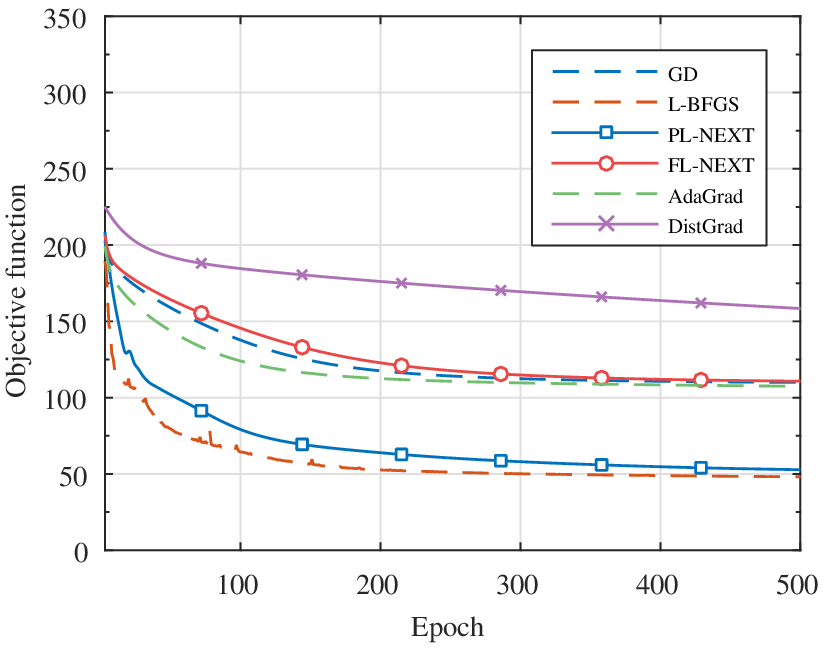}%
		\label{fig:kin8nm_obj}}
	\vfill
	\subfloat[Test error (Boston)]{\includegraphics[scale=0.8]{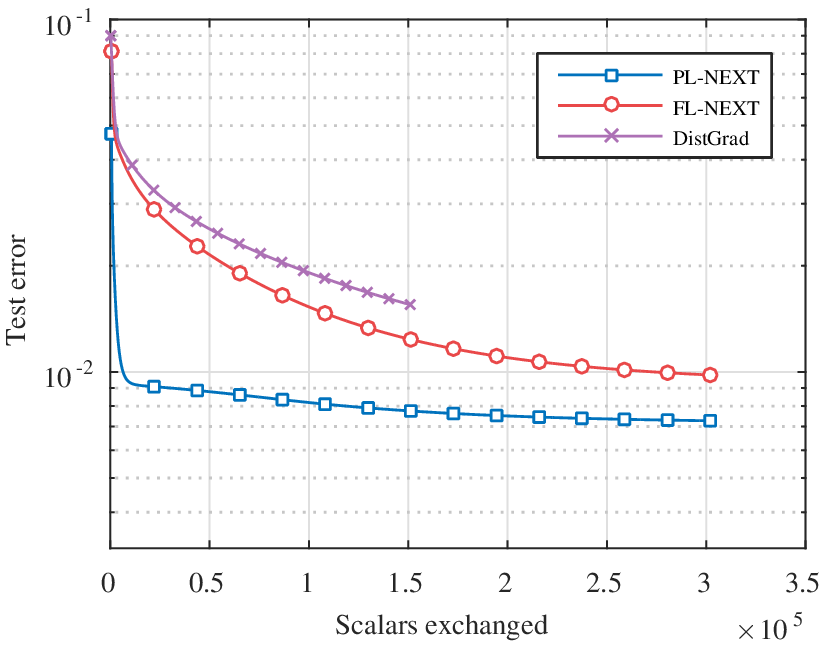}%
			\label{fig:boston_test_error}}
	\hfil
	\subfloat[Test error (Kin8nm)]{\includegraphics[scale=0.8]{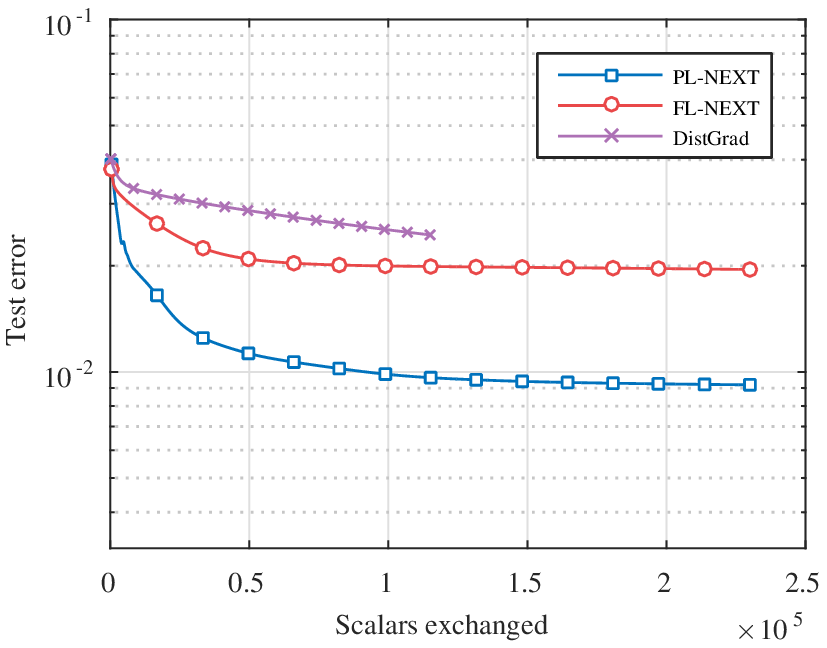}%
		\label{fig:kin8nm_test_error}}
	\vfill
	\subfloat[Disagreement (Boston)]{\includegraphics[scale=0.8]{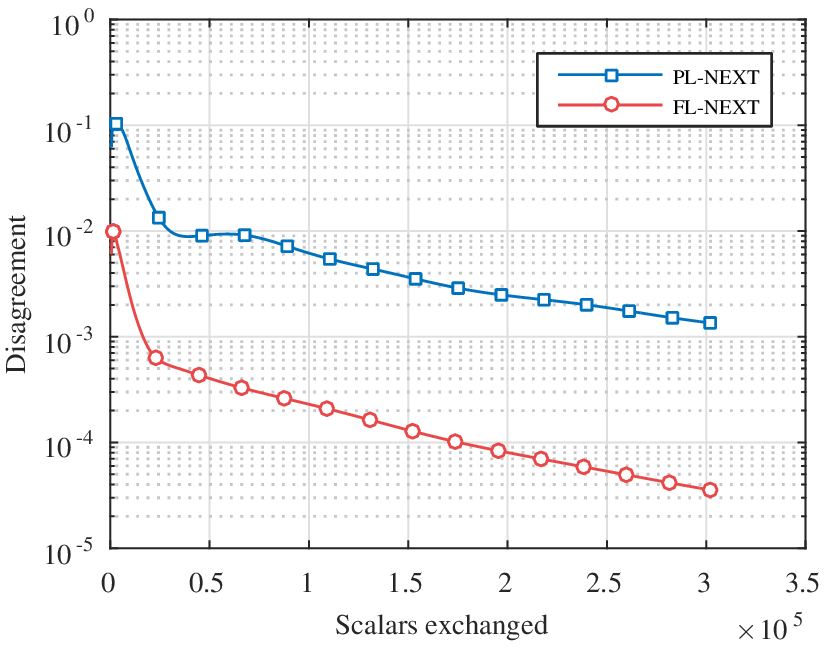}%
		\label{fig:boston_dis}}
	\hfil
	\subfloat[Disagreement (Kin8nm)]{\includegraphics[scale=0.8]{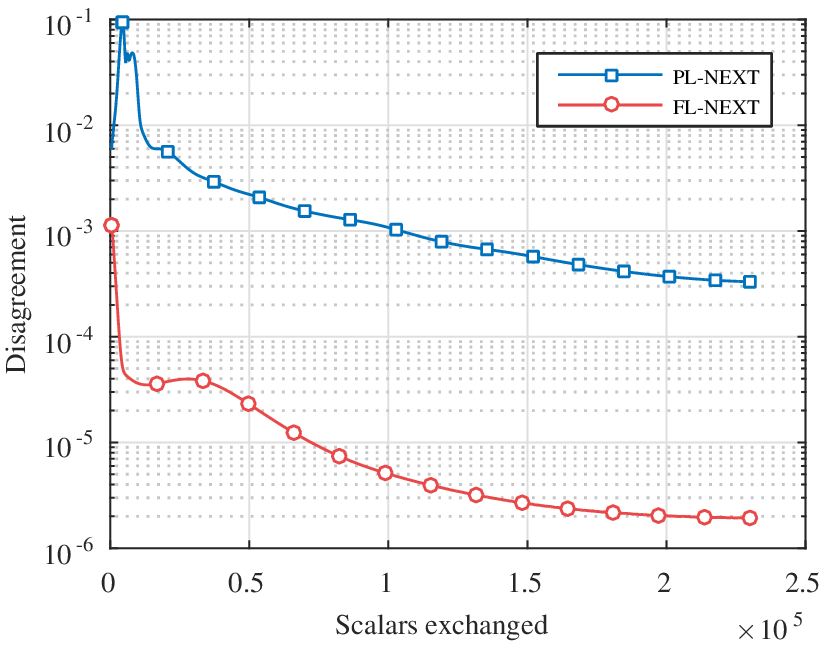}%
		\label{fig:kin8nm_dis}}
	\caption{(a-b) Cost function value (per epoch). (c-d) Test error (per scalars exchanged). (e-f) Evolution of the disagreement (per scalars exchanged). Graphs on the left column are for the Boston dataset, graphs on the right column are for the Kin8nm dataset. For readability, centralized algorithms are represented with dashed lines, while distributed algorithms are represented with solid lines with specific markers.}
	\label{fig:boston_and_kin8nm}
\end{figure*}

In Figs. \ref{fig:boston_obj}-\ref{fig:kin8nm_obj} we plot the evolution of the global cost function in \eqref{Dist_NN_training} for FL-NEXT, PL-NEXT, DistGrad and a few representative centralized solvers for two different datasets. For improved readability, the behavior of centralized solvers is depicted using dashed lines, while the distributed algorithms are shown with solid lines. We see that the results are similar to what we have already discussed previously for the final test error: PL-NEXT is able to track consistently the convergence rate of L-BFGS, while FL-NEXT achieves results comparable to (centralized) first-order procedures. Differently, the DistGrad algorithm is slower and, for a given number of epochs, has a very large gap compared to other methods.

Another performance metric of interest is the transient behavior of the test error in terms of the amount of scalar values that are exchanged among agents in the network. We plot this metric for the three distributed algorithms in Figs. \ref{fig:boston_test_error}-\ref{fig:kin8nm_test_error}, where the $y$-axis is shown with a logarithmic scale. We notice that DistGrad requires exactly half as many scalars to be exchanged at every iteration (since it does not rely on the dynamic consensus to track the average gradient). Nevertheless, from Figs. \ref{fig:boston_test_error}-\ref{fig:kin8nm_test_error}, we can see that both PL-NEXT and FL-NEXT can reach better errors with respect to DistGrad for any given amount of scalars exchanged, showing their better efficiency in terms of overall communication burden. PL-NEXT is particularly well performing, with only a very small amount of communication required for achieving an error close to the optimal one.

A final metric of interest is the average disagreement among the agents, which is computed as:
\begin{equation}\label{disagreement}
D[n] \triangleq \frac{1}{I} \norm[\infty]{ \vect{w}_i[n] - \overline{\vect{w}}[n] } \,.
\end{equation}
We plot the behavior of (\ref{disagreement}) for PL-NEXT and FL-NEXT in Figs. \ref{fig:boston_dis}-\ref{fig:kin8nm_dis}, where we can see that both algorithms rapidly tend to reach a consensus among the different agents in the network.

\subsection{Exploiting parallelization}
\label{sec:exploiting_parallelization}

\begin{figure*}[!t]
	\centering
	\subfloat[Speedup (Boston)]{\includegraphics[scale=0.8]{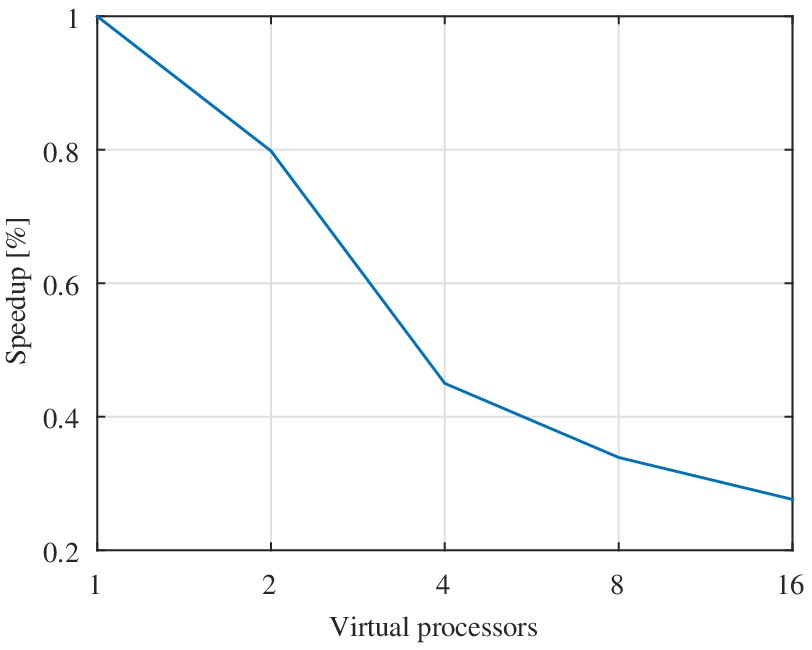}%
		\label{fig:boston_parallel_time}}
	\hfil
	\subfloat[Speedup (Kin8nm)]{\includegraphics[scale=0.8]{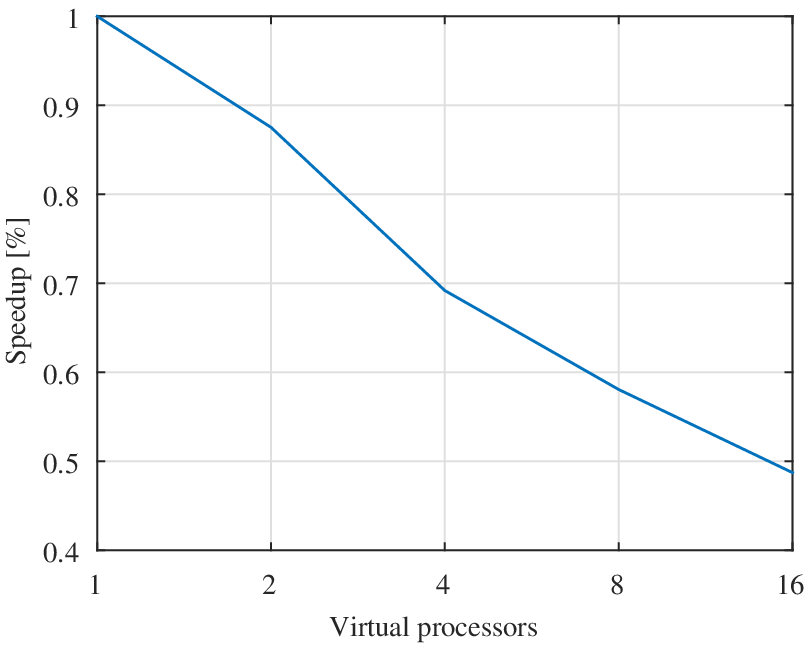}%
		\label{fig:kin8nm_parallel_time}}
	\vfill
	\subfloat[Cost function (Boston)]{\includegraphics[scale=0.8]{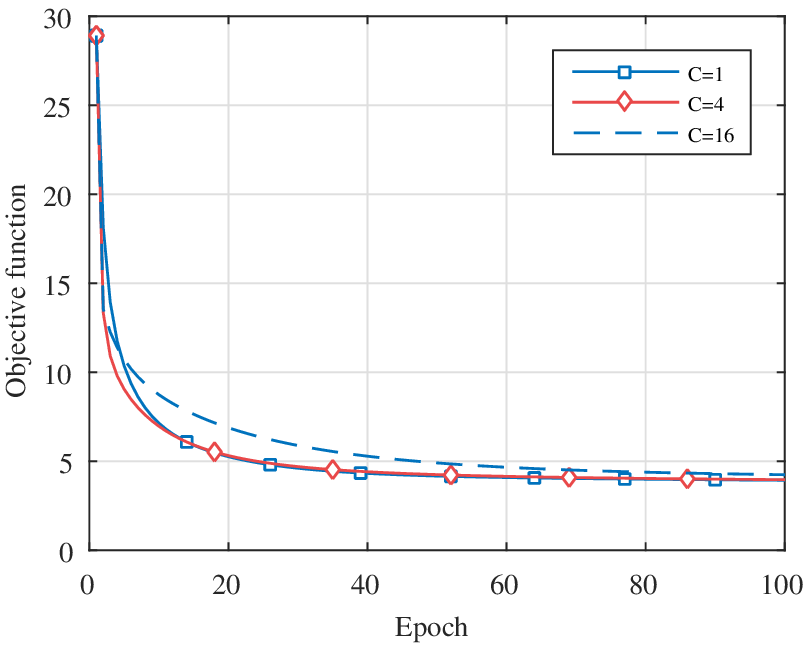}%
		\label{fig:boston_parallel_obj}}
	\hfil
	\subfloat[Cost function (Kin8nm)]{\includegraphics[scale=0.8]{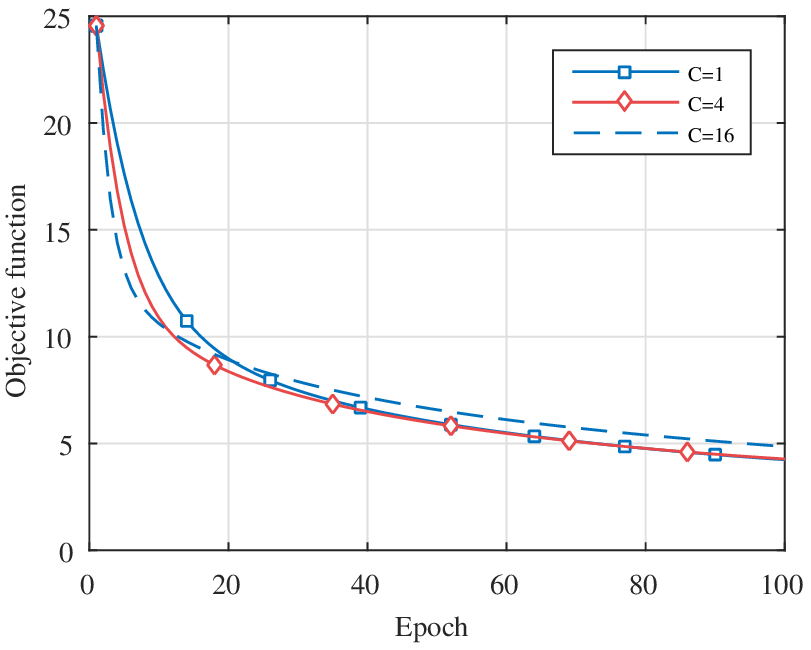}%
		\label{fig:kin8nm_parallel_obj}}
	\caption{(a-b) Relative speedup (per number of local processors). (c-d) Cost function value (per epoch). Graphs on the left column are for the Boston dataset, graphs on the right column are for the Kin8nm dataset.}
	\label{fig:boston_and_kin8nm_parallel}
\end{figure*}

Next, we investigate the speed-up obtained by parallelizing the local optimization at each agent. We consider again the Boston and Kin8nm datasets, but we vary the number of (local) processors available at every agent in the range $2^j$, with $j=0, 1, \ldots, 4$. The relative speedup with respect to the baseline $C=1$ is shown in Figs. \ref{fig:boston_parallel_time}-\ref{fig:kin8nm_parallel_time}. We see that the speedup is roughly linear with respect to the amount of available processors, so that in the case $C=16$ we only need $\approx \frac{1}{3}$ of the time for Boston, and $\approx \frac{1}{2}$ for Kin8nm. Additionally, in Figs. \ref{fig:boston_parallel_obj}-\ref{fig:kin8nm_parallel_obj}, we can visualize the evolution of the overall cost function for $C=1$, $C=4$ and $C=16$. From the figures, we can notice that the improvement in training time is obtained with only a limited effect on the convergence behavior, where in the worst case we obtain only a very small (or null) decrease.

\subsection{Experiment on a large-scale dataset}
\label{sec:large_scale_experiment}

Before concluding this experimental section, we briefly discuss the important point of large-scale distributed learning, i.e., performing distributed inference whenever $N_k$ is very large for the majority of the agents. To this end, we consider the YearPredictionMSD dataset \citep{bertin2011million}, which is one of the largest regression datasets available on the UCI repository. The task is to predict the year of release of a song starting from $90$ audio features. There are $463,715$ examples for training, and $51,630$ examples for testing (of different authors). Similarly to before, we preprocess the input and output values in $\left[0, 1\right]$, and we consider a NN with two hidden layers having, respectively, $40$ and $20$ neurons. We partition the training data among $10$ different agents, and we compare PL-NEXT with AdaGrad. We choose AdaGrad for two main reasons, i.e., it was found to be extremely fast in the previous section, and we can use it together with stochastic updates over small batches of data in order to handle the large-scale dataset. Specifically, for every iteration AdaGrad is updated with mini-batches of $500$ elements, and accuracy is computed after a complete pass over the training dataset. The regularization is chosen as $\lambda = 10$. Step-sizes are chosen in order to guarantee a smooth convergence behavior.

\begin{figure}
\centering
\includegraphics[scale=0.8]{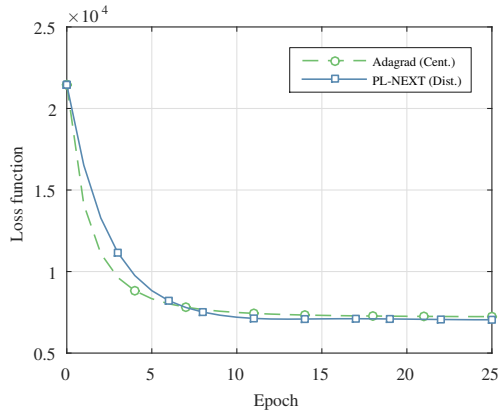}
\caption{Average evolution of the loss on the MSD dataset (see the text for a full description). For AdaGrad, one epoch corresponds to an entire pass over the training data.}
\label{fig:msd_loss_evolution}
\end{figure}

The evolution of the global loss function in \eqref{Dist_NN_training} is shown in Fig. \ref{fig:msd_loss_evolution}. Despite AdaGrad making several stochastic update steps at every iteration, PL-NEXT is able to achieve a comparable convergence behavior, with a minimum loss value which is slightly better due to the unnoisy gradient evaluations. Both algorithms also achieve a similar mean squared error on the independent test set, which is around $0.011$. For comparison, the average MSE of a support vector algorithm is around $0.013/0.014$ \citep{ho2012large}.

This example shows two important aspects of large-scale inference over networks. First of all, what is considered a challenging benchmark in a centralized environment might be relatively simpler in a distributed experiment, since the training data must be partitioned over several agents. In this case, for example, the original half a million training points must be partitioned over $10$ agents, so that each agent only has to deal with $\approx 50,000$ training points. Thus, there is the need of designing more elaborate benchmarks to test the capabilities of the algorithms is larger situations. Secondly, properly handling these datasets will require the development of stochastic updates at every agent, paralleling the stochastic algorithms used in the centralized case and commonly used in the deep learning literature. Having such stochastic algorithms for distributed, non-convex problems remain an open problem in the literature, and we remark it here as the main line of research for future investigations.

\section{Conclusions and Future Works}
\label{sec:conclusion}

In this paper, we have investigated the problem of training a NN model in a distributed scenario, where multiple agents have a limited knowledge of the training dataset. We have proposed a provably convergent procedure to this aim, which builds exclusively on local optimization steps and one-hop communication steps. The method can be customized to several typical error functions and regularization terms. We have also described an immediate way to parallelize the local optimization phase across multiple processors/machines, available at each agent, with a limited impact on the convergence behavior.

One immediate extension of the framework presented here is to handle non-convex regularization terms, which are generally considered too challenging in practice. One example is the sample variance penalization \citep{maurer2009empirical}, which is defined in terms of the NN output. Additional extensions can consider the presence of non-differentiable points in the NN model (e.g., by using ReLu activation functions), stochastic updates of the surrogate functions, and online formulations where new data arrives in a streaming fashion, like in distributed filtering \citep{sayed2014adaptive}. Some interesting results can derive by considering the literature on distributed constraint optimization problems (DCOP), which deals with distributed decision making problems where the decision variables are separated among the different agents \citep{modi2005adopt,rogers2011bounded}. Finally, we are interested in testing our framework on real-world applications such as, e.g., multimedia classification and chaotic prediction tasks.

\bibliographystyle{elsarticle-harv}
\bibliography{biblio}

\end{document}